\newcommand{\Exp}{\mathbf{E}}
\newcommand{\R}{{\mathbb R}}
\newcommand{\cH}{{\cal H}}
\newcommand{\bx}{{\bf x}}
\newcommand{\by}{{\bf y}}
\newcommand{\bX}{{\bf X}}
\newcommand{\bY}{{\bf Y}}
\newcommand{\bZ}{{\bf Z}}
\newcommand{\cF}{{\cal F}}
\newcommand{\cN}{{\cal N}}
\newcommand{\ba}{{\bf a}}
\newcommand{\bc}{{\bf c}}
\newcommand{\bz}{{\bf z}}
\newcommand{\sx}{\Sigma_{\bX\bX}}
\newcommand{\sz}{\Sigma_{\bZ\bZ}}
\newcommand{\sxy}{\Sigma_{\bX Y}}
\newcommand{\sxe}{\Sigma_{\bX E}}
\newtheorem{Theorem}{Theorem}
\newtheorem{Lemma}{Lemma}
\newtheorem{Definition}{Definition}
\tikzset{>=stealth'} 
\tikzstyle{graphnode} = 
\tikzstyle{var}   =[graphnode,fill=white]
\tikzstyle{vardashed}   =[graphnode,draw=gray,fill=white]
\tikzstyle{obs}   =[graphnode,fill=black,text=white]
\tikzstyle{obsgrey}   =[graphnode,draw=white,fill=lightgray,text=black]
\tikzstyle{par}    =[graphnode,draw=white,fill=red,text=black] 
 \tikzstyle{crucial} =[graphnode,draw=white,fill=yellow,text=black] 
\tikzstyle{fac}   =[rectangle,draw=black,fill=black!25,minimum size=5pt]
\tikzstyle{facprior} =[rectangle,draw=black,fill=black,text=white,minimum size=5pt]
\tikzstyle{edge}  =[draw=white,double=black,very thick,-]
\tikzstyle{blueedge}  =[draw=white,double=blue,very thick,-]
\tikzstyle{rededge}  =[draw=white,double=red,very thick,-]
\tikzstyle{prior} =[rectangle, draw=black, fill=black, minimum size=
\tikzstyle{dirprior} = [circle, draw=black, fill=black, minimum
\tikzstyle{dot_node}=[draw=black,fill=black,shape=circle]
\icmltitlerunning{Causal Regularization}
\begin{document}

\twocolumn[
\icmltitle{Causal Regularization}

% It is OKAY to include author information, even for blind
% submissions: the style file will automatically remove it for you
% unless you've provided the [accepted] option to the icml2019
% package.

% List of affiliations: The first argument should be a (short)
% identifier you will use later to specify author affiliations
% Academic affiliations should list Department, University, City, Region, Country
% Industry affiliations should list Company, City, Region, Country

% You can specify symbols, otherwise they are numbered in order.
% Ideally, you should not use this facility. Affiliations will be numbered
% in order of appearance and this is the preferred way.
%\icmlsetsymbol{}{}

\begin{icmlauthorlist}
\icmlauthor{Dominik Janzing}{1}
\end{icmlauthorlist}

\icmlaffiliation{1}{Amazon Development Center, T\"ubingen, Germany}

\icmlcorrespondingauthor{}{janzind@amazon.com}
%\icmlcorrespondingauthor{Eee Pppp}{ep@eden.co.uk}

% You may provide any keywords that you
% find helpful for describing your paper; these are used to populate
% the "keywords" metadata in the PDF but will not be shown in the document
\icmlkeywords{Machine Learning, ICML}

\vskip 0.3in
]

% this must go after the closing bracket ] following \twocolumn[ ...

% This command actually creates the footnote in the first column
% listing the affiliations and the copyright notice.
% The command takes one argument, which is text to display at the start of the footnote.
% The \icmlEqualContribution command is standard text for equal contribution.
% Remove it (just {}) if you do not need this facility.

%\printAffiliationsAndNotice{}  % leave blank if no need to mention equal contribution
\printAffiliationsAndNotice{\icmlEqualContribution} % otherwise use the standard text.

\pagestyle{plain}

\begin{abstract}
I argue that regularizing terms in standard regression methods not only help against overfitting {\it finite} data, but sometimes
also yield better  {\it causal} models in the infinite sample regime. 
I first  consider a multi-dimensional variable  linearly influencing a target variable with some multi-dimensional unobserved common cause, where the confounding effect can be decreased by keeping the penalizing term in Ridge and Lasso regression even in the population limit. 
Choosing the size of the penalizing term, is however challenging, because cross validation is pointless.
Here it is done 
 by first estimating the strength of confounding via a method proposed earlier, which yielded some reasonable results for simulated and real data.

Further, I prove a `causal generalization bound'   which states   (subject to a particular model of confounding)  
that the error made by interpreting any non-linear regression as {\it causal} 
model can be bounded from above whenever functions are taken from a not too rich class. 
In other words, the bound guarantees `generalization' from {\it observational} to {\it interventional} distributions,
 which is usually not subject of statistical learning theory (and is only possible due to the underlying symmetries of the
 confounder model).  
\end{abstract}

\section{Introduction}

Predicting a scalar target variable $Y$ from a $d$-dimensional predictor $\bX:=(X_1,\dots,X_d)$ via appropriate regression models is among the classical problems 
of machine learning, see e.g. \cite{Schoelkopf2002}. In the standard supervised learning scenario, some finite number of observations, independently drawn from 
an unknown but fixed joint distribution $P_{Y,\bX}$, are used for inferring $Y$-values corresponding to unlabelled $\bX$-values. 
To solve this task, regularization is known to be crucial for obtaining regression models 
that generalize well from training to test data \cite{Vapnik}.  
Deciding whether such a regression model admits a {\it causal} interpretation is, however, challenging. Even if
causal influence from $Y$ to $\bX$ can be excluded (e.g. by time order), the statistical relation between $\bX$ and $Y$ cannot necessarily be attributed to the influence of $\bX$ on $Y$. Instead, it could be due to possible common causes, also called `confounders'.  For the case where common causes are known and observed, there
is a huge number of techniques to infer the causal influence\footnote{often for $d=1$ and with a binary treatment variable $X$}, 
e.g.,  \citep{Rubin}, addressing different challenges, for instance, high dimensional confounders \citep{Chernuzkov18} or the case where some 
variables other than the 
common causes are observed \citep{Pearl:00}, just to mention a few of them. If common causes are not known, the task of inferring the influence of $\bX$ on $Y$ gets 
incredibly hard. Given observations from any further variables other than $\bX$ and $Y$, conditional independences may help to detect or disprove the existence of common causes \citep{Pearl:00}, and so-called instrumental variables may admit the identification of causal influence \citep{Imbens94}. 

Here we consider the case where only observations from $\bX$ and $Y$ are given. In this case, naively interpreting the regression model
as causal model is a natural baseline. Within our simplified scenario, we show that strong regularization increases the chances that the regression model contains some causal truth. I am aware of the risk that this result could be mistaken as a justification to ignore the hardness of the problem and blindly infer causal models by strong regularization. My goal is, instead, to inspire a discussion on to what extent causal modelling should regularize even in the infinite sample limit
due to some analogies between {\it generalizing across samples from the same distribution} and {\it `generalizing' from observational to interventional distributions},
which appear in a particular model of confounding, while they need not apply to other confounding scenarios. The idea that regularization can also help for better generalization across different environments rather than only across different subsamples from the same distribution can already be found in the literature
\cite{HeinzeDeml2017b}, but here I describe a model of confounding for which the analogy between confounding and overfitting is so tight that exactly the
same techniques help against both. 

\paragraph{Scenario 1: inferring a linear statistical model} To explain the idea, we consider the simple case where the 
statistical relation between $\bX$ and $Y$ is given by the linear model
\begin{equation}\label{eq:simplelinear}
Y = \bX \ba + E,
\end{equation}
where $\ba$ is a column vector in $\R^d$ and $E$ is an uncorrelated unobserved noise variable, i.e.,   $\sxe =0$. 
We are given observations from $\bX$ and $Y$. Let $\hat{Y}$ denote the column vector of centred renormalized observations $y^i$ of $Y$, i.e., with entries $(y^i-\frac{1}{n}\sum_{i=1}^n y^i)/\sqrt{n-1}$, and similarly, $\hat{E}$ denotes the centred renormalized values of $E$. Likewise, let
$\hat{\bX}$ denote the $n\times d$ matrix whose $j$th column contains the centred renormalized observations from $X_j$. 
Let, further,  $\hat{\bX}^{-1}$ denote its (Moore-Penrose) pseudoinverse.  
To avoid overfitting, the least squares estimator\footnote{Here we have, for simplicity, assumed $n> d$.} 
\begin{equation}\label{eq:lsq}
\hat{\ba}:= {\rm argmin}_\ba'  \|\hat{Y} -  \hat{\bX} \ba' \|^2 = \hat{\bX}^{-1} \hat{Y} = \ba + \hat{\bX}^{-1} \hat{E},
\end{equation}  
is replaced with the 
Ridge and Lasso estimators
\begin{eqnarray}
\hat{\ba}^{\rm ridge}_\lambda &:=& {\rm arg min}_{\ba'} \{ \lambda \|\ba'\|^2_2 +  \|\hat{Y}  - \hat{\bX} \ba'\|^2 \} \label{eq:ridge}\\
\hat{\ba}^{\rm lasso}_\lambda &:=& {\rm arg min}_{\ba'} \{ \lambda \|\ba'\|_1 + \|\hat{Y} - \hat{\bX} \ba'\|^2 \} \label{eq:lasso}, 
\end{eqnarray}
where $\lambda$ is a regularization parameter \citep{Hastie2008}. 

%For our purposes, it is helpful to rewrite \eqref{eq:ridge} and \eqref{eq:lasso} in terms of sample covariance matrices. Using the projection $\bx \bx^{-1}$ 
%we decompose the error term into
%\begin{eqnarray*}
%\|y-\bx \ba'\|^2 &=& \|\bx \bx^{-1} y -\bx\ba'\|^2 + \|(I-\bx\bx^{-1})y\|^2. 
%\end{eqnarray*}
%&=& \langle (\bx^{-1}y - \ba'),\bx^T \bx (\bx^{-1}y - \ba')\rangle \\
%&=& \langle (\hat{\ba} -\ba'), \widehat{\sx} (\hat{\ba} -\ba')\rangle,    
%\end{eqnarray*}
%The first term can be rewritten as
%\[
%\langle (\hat{\ba} -\ba'), \widehat{\sx} (\hat{\ba} -\ba')\rangle,
%\]
%while the second part is irrelevant for the minimization since it does not depend on $\ba'$.
%We thus have 
%\[
%\hat{\ba}_\lambda = \
%{\rm arg min}_{\ba'} \left\{  \begin{array}{c} 
%\lambda \|\ba'\|^2_2 +   \langle (\hat{\ba} -\ba'), \widehat{\sx} (\hat{\ba} -\ba')\rangle  \\
 %\lambda \|\ba'\|_1 +    \langle (\hat{\ba} -\ba'), \widehat{\sx} (\hat{\ba} -\ba')\rangle       \end{array}\right\} 
%\]
%In other words, Ridge and Lasso trade off the $\ell_1$ or $\ell_2$ norm of the regression vector with its square distance from the unregularized vector (w.r.t.~a
%sample-covariance matrix induced inner product). 

So far we have only described the standard scenario of inferring properties of the conditional $P_{Y|X}$ from finite observations $\hat{\bX},\hat{Y}$ without any {\it causal} semantics.

\paragraph{Scenario 2: inferring a linear causal model}
We now modify the scenario in three respects. First, we assume that
 $E$ and $\bX$  in \eqref{eq:simplelinear} correlate due to some unobserved common cause.  Second, we  interpret  \eqref{eq:simplelinear} 
in a {\it causal way} in the sense that setting $\bX$ to $\bx$ lets $Y$ being distributed according to $ \bx \ba +E$.
Using Pearl's do-notation \citep{Pearl:00}, this can be phrased as
\begin{equation}\label{eq:intob}
Y|_{do(\bX=\bx)} = \bx \ba + E  \quad \neq Y|_{\bX=\bx}. 
\end{equation}
Third, we assume the infinite sample limit where $P_{\bX,Y}$ is known. We still want to infer $\ba$ because we are interested in causal statements 
but regressing $Y$ on $\bX$ yields $\hat{\ba}$ instead which describes the  {\it observational} conditional on the right hand side of \eqref{eq:intob}.

Conceptually, Scenario 1 and 2 deal with two entirely different problems: inferring $P_{Y|{X=x}}$  from finite samples $(\hat{\bX},\hat{Y})$ versus inferring the interventional conditional $P_{Y|do(\bX=\bx)}$ from the observational distribution $P_{Y,\bX}$. However, in our case, both problems amount to inferring the vector $\ba$. Further, for both scenarios, the error term $\hat{\bX}^{-1} \hat{E}$ is responsible for the failure of ordinary least squares regression.  Only the reason why this term is non-zero differs: in the first scenario it is a finite sample effect, while it results from confounding in the second one. The idea of the present paper is simply that standard regularization techniques do not care about the {\it origin} of this error term. Therefore, they can temper the impact of confounding in the same way as they help avoiding to overfit finite data.  Such a strong statement, for course, relies heavily on our highly idealized generating model for the confounding term. We therefore ask the reader not to quote it without also mentioning the strong assumptions. 
%In general, inferring $P_{Y|do(\bX=\bx)}$ from $P_{\bX,Y}$ is incredibly hard. After all, popular scenarios of causal inference infer $P_{Y|do(\bX=\bx)}$
% when not only observations from $\bX,Y$ but also from the common causes are given \cite{Pearl:00,Rubin}. 

 The paper is structured as follows. Section~\ref{sec:overfitconfound} elaborates on the analogy between overfitting and confounding by only slightly generalizing 
 observations of \citet{JanSch18}. Section~\ref{sec:popRidgeLasso} describes population versions of Ridge and Lasso regression and provides a Bayesian justification. 
 %Section~\ref{sec:generalization} describes that generalizing between different samples from the same distribution for scenario 1 is analog to generalizing across
 %samples from different background conditions for scenario 2. 
 Section~\ref{sec:est} proposes a way to determine the regularization parameter in scenario 2 by estimating the
 strength of confounding via a method proposed by \citet{JanSch18}. 
 Section~\ref{sec:exp} describes some empirical results. 
 Section~\ref{sec:learningtheory} describes a modified statistical learning theory that states that 
 regression models from not too rich function classes `generalize' from statistical to causal statements. 
 %Section~\ref{sec:outlook} speculates about which part of our ideas still hold when our particularly simple generating model for confounding is violated. 

\section{Analogy between overfitting and confounding \label{sec:overfitconfound}}

The reason why our scenario 2 only considers the {\it infinite} sample limit of confounding is that mixing finite sample and confounding significantly complicates the 
theoretical results. 
For a concise description of the population case, we consider the Hilbert space $\cH$ of centred random variables (on some probability space without further specification) with finite variance. The inner product is given by the covariance, e.g.,
\begin{equation}\label{eq:hilbertinner}
\langle X_i, X_j\rangle := {\rm cov}(X_i,X_j). 
\end{equation}
Accordingly, we can interpret $\bX$ as an operator\footnote{Readers not familiar with operator theory may read all our operators as matrices with 
huge $n$ without loosing any essential insights -- except for the cost of having to interpret all equalities as {\it approximate} equalities. To facilitate this way of reading, we will use $(\cdot)^T$ also for the adjoint of operators in $\cH$ although $(\cdot)^*$ or $(\cdot)^\dagger$ is common.}   $\R^d\to \cH$ via $(b_1,\dots,b_d) \mapsto \sum_j b_j X_j$.   
Then the population version of \eqref{eq:lsq} reads
\begin{equation}\label{eq:atilde}
\tilde{\ba} =  {\rm argmin}_\ba'  \{\| Y -  \bX \ba' \|^2 \}= \bX^{-1} Y = \ba + \bX^{-1} E,
\end{equation}
where the square length is induced by the inner product \eqref{eq:hilbertinner}, i.e., it is simply the variance. Extending the previous notation, $\bX^{-1}$ now denotes the pseudoinverse 
%\footnote{For any $f$ in the image of $\bX$, $X^{-1} f$ is the solution of $X g =f$ with minimal norm \cite{Christensen95}.} 
of
the operator $\bX$ \citep{Beutler65}.
To see that 
$\bX^{-1} E$ is only non-zero when $\bX$ and $E$ are correlated it is helpful to rewrite it as
\begin{equation}\label{eq:sxe}
\bX^{-1} E= \sx^{-1} \sxe,
\end{equation}
where we have assumed $\sx$ to be invertible (see appendix for a proof).
The claim that standard regularization like Ridge and Lasso work for tempering the impact of the term $\bX^{-1} E$ in the same way as they work for
$\hat{\bX}^{-1} \hat{E}$ is inspired by the observation of 
\citet{JanSch18} that these terms follow the same distribution subject to the idealized model assumptions described there. We obtain the same analogy for a slightly more general
model which we describe now.

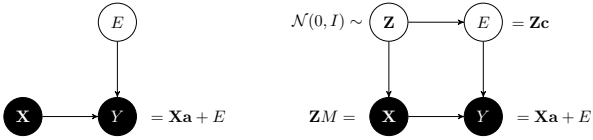
\begin{figure}
\begin{center}
\resizebox{8cm}{!}{
  \begin{tikzpicture}
    \node[obs] at (0,0) (X) {$\bX$} ;
    \node[obs] at (2,0) (Y) {$Y$} edge[<-] (X);
    \node[var] at (2,2) (E) {$E$} edge[->] (Y);  
    \node[anchor=center] at (3.5,0) {$= \bX \ba + E$};
  \end{tikzpicture}
  \hspace{1cm}
    \begin{tikzpicture}
    \node[obs] at (0,0) (X) {$\bX$} ;
    \node[anchor=center] at (-1.2,0) {$\bZ M =$};
    \node[obs] at (2,0) (Y) {$Y$} edge[<-] (X);
    \node[anchor=center] at (3.5,0) {$= \bX \ba + E$};
    \node[var] at (2,2) (E) {$E$} edge[->] (Y);  
    \node[anchor=center] at (3,2) {$= \bZ \bc$};
    \node[var] at (0,2) (Z) {$\bZ$} edge[->] (X) edge[->] (E); 
    \node[anchor=center] at (-1.3,2) {$\cN(0,I)  \sim$};
  \end{tikzpicture}
}
\caption{\label{fig:dags} Left: In scenario 1, the empirical correlations between $X$ and $E$ are only finite sample effects.  Right:
In scenario 2, $\bX$ and $E$ are correlated due to their common cause $\bZ$. We sample the structural parameters $M$ and $\bc$ from   
distributions in a way that entails a simple analogy between scenario 1 and 2.} 
\end{center}
\end{figure}

\paragraph{Generating model for scenario 1}
The following procedure generates samples according to the DAG in Figure~\ref{fig:dags}, left:

\begin{tabular}{ll}
1. & Draw $n$ observations from $(X_1,\dots,X_d)$ \\ 
    & independently from $P_X$ \\
2. & Draw samples of $E$ independently from $P_E$\\
3. & Draw the vector $\ba$ of structure coefficients\\
    &  from some distribution $P_\ba$ \\
4. & Set $\hat{Y} := \hat{\bX}\ba + \hat{E}$.\\ 
\end{tabular}

\paragraph{Generating model for scenario 2} 

To generate random variables according to the DAG in Figure~\ref{fig:dags}, right, we assume that both variables $\bX$ and $E$ are generated
from the same set of independent sources by applying a random mixing matrix or a random mixing vector, respectively:

Given an $\ell$-dimensional random vector $\bZ$ of sources with distribution $\cN(0,I)$. 

\begin{tabular}{ll}
1 . & Choose an $\ell \times d$ mixing matrix $M$ \\
 & and set
%\begin{equation}\label{eq:MZ}
$\bX := \bZ M$.
%\end{equation}
\\
2. & Draw $\bc\in \R^\ell$ from some distribution $P_\bc$ \\
 & and set
$
E:=  \bZ\bc.
$ \\
3. & Draw the vector $\ba$ of structure coefficients\\
&  from some distribution $P_\ba$\\
4. &  Set 
%\begin{equation}\label{eq:Y}
$Y:= \bX \ba + E $.
%\end{equation}
\end{tabular}

We then obtain:
%\begin{Lemma}[equivalence of error vectors]\label{lem:errorequiv}
%Let $M$ in scenario 2 coincide with $\hat{\bX}$ in scenario 1 (which implies that the sample size $n$ translates into the  number $\ell$ of sources).
%Generate  $\bc$  in scenario 2 by drawing each entry independently  from $P_E$ from scenario 1 and dividing it by $\sqrt{\ell-1}$, then  
%$\bX^{-1}E$ in scenario 2 follows the same distribution as $\hat{\bX}^{-1}\hat{E}$ in scenario 1.
%\end{Lemma}   
%\begin{proof}
%We have $\bX^{-1} E = (\bZ M)^{-1} \bZ \bc =M^{-1} \bZ^{-1} \bZ \bc = M^{-1} \bc$, where we have used that the operator $\bZ$ has full rank.
%$M^{-1}\bc$ follows the same distribution as 
%$\hat{\bX}^{-1} \hat{E}$: First, $M$ follows the same distribution as $X$, and second, $\bc$ follows the same distribution as $\hat{E}$ after centering, 
%but centering does not change $ \hat{\bX}^{-1} \bc$ because it only affects
%the component in the kernel of $\hat{\bX}^{-1}$. 
%\end{proof}
%Moreover, we have:
\begin{Theorem}[population and empirical covariances]
\label{thm:equivalence}
Let the number $\ell$ of sources in scenario 2 be equal to the number $n$ of samples in scenario 1 and 
 $P_M$ coincide  with the distribution of sample matrices $\hat{\bX}$ induced by $P_\bX$. Let, moreover, $P_\bc$ in scenario 2 coincide with the distribution
 of $\hat{E}$ induced by $P_E$ in scenario 1, and $P_\ba$ be the same in both scenarios. 
 Then the joint distribution of $\ba,\sx,\sxy,\sxe$ in scenario 2 coincides with the joint distribution of
$\ba,\widehat{\sx},\widehat{\sxy},\widehat{\sxe}$ in scenario 1.
\end{Theorem}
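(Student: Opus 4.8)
The plan is to exhibit both quadruples — $(\ba,\sx,\sxy,\sxe)$ arising in scenario~2 and $(\ba,\widehat{\sx},\widehat{\sxy},\widehat{\sxe})$ arising in scenario~1 — as the image of one and the same deterministic map applied to random ingredients whose joint laws coincide by hypothesis; equality of the two push‑forward distributions is then automatic. So the proof splits into (i) a one‑line operator computation in scenario~2, (ii) the analogous bookkeeping in scenario~1, and (iii) matching up the joint laws of the ingredients.

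For (i), the key observation is that $\bZ\sim\cN(0,I)$ makes the sources orthonormal in the covariance inner product, $\langle Z_i,Z_j\rangle={\rm cov}(Z_i,Z_j)=\delta_{ij}$; equivalently, read as an operator $\R^\ell\to\cH$ the map $\bZ$ is an isometry, so $\bZ^T\bZ={\rm id}_{\R^\ell}$. Substituting $\bX=\bZ M$ and $E=\bZ\bc$ into the definitions of the covariances and cancelling $\bZ^T\bZ$ yields
\begin{align}
\sx &= \bX^T\bX = M^T\bZ^T\bZ M = M^TM,\\
\sxe &= \bX^T E = M^T\bZ^T\bZ\bc = M^T\bc,\\
\sxy &= \bX^T Y = \bX^T(\bX\ba+E) = \sx\ba+\sxe = M^TM\ba+M^T\bc,
\end{align}
while $\ba$ maps to itself; note this also re‑derives the appendix identity \eqref{eq:sxe}. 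Hence scenario~2 produces exactly $\Phi(\ba,M,\bc)$ for the fixed measurable map $\Phi(a,A,b):=\big(a,\ A^TA,\ A^TAa+A^Tb,\ A^Tb\big)$.

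For (ii): by the centring and $1/\sqrt{n-1}$ renormalization convention, $\hat{\bX}^T\hat{\bX}$ is precisely the empirical covariance $\widehat{\sx}$ and $\hat{\bX}^T\hat{E}=\widehat{\sxe}$, and since $\hat{Y}=\hat{\bX}\ba+\hat{E}$ one gets $\widehat{\sxy}=\hat{\bX}^T\hat{Y}=\widehat{\sx}\ba+\widehat{\sxe}$; thus $(\ba,\widehat{\sx},\widehat{\sxy},\widehat{\sxe})=\Phi(\ba,\hat{\bX},\hat{E})$ with the \emph{same} $\Phi$. For (iii), in scenario~1 the triple $(\ba,\hat{\bX},\hat{E})$ consists of three mutually independent draws with marginals $P_\ba$, $P_M$ (since $P_M$ is by definition the law of $\hat{\bX}$), $P_\bc$ (since $P_\bc$ is by definition the law of $\hat{E}$); in scenario~2 the triple $(\ba,M,\bc)$ is the independent triple drawn in steps~1--3 with the same marginals $P_\ba,P_M,P_\bc$, and $\ell=n$ makes the dimensions agree. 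So the two input triples are equal in distribution, and applying the common map $\Phi$ preserves this, which is the assertion.

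The step requiring care is the operator identity $\bZ^T\bZ={\rm id}$ and, more generally, handling adjoints and pseudoinverses in $\cH$ consistently — the point at which the ``operators as large matrices'' caveat of Section~\ref{sec:overfitconfound} must be taken at face value. One should also be explicit that the generating procedure for scenario~2 draws $M$, $\bc$ and $\ba$ independently of one another, so that it is the \emph{joint} law of the ingredients, not merely their marginals, that is transported; everything past that is routine.
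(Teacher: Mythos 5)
Your proof is correct and follows essentially the same route as the paper: compute $\sx=M^TM$, $\sxe=M^T\bc$, $\sxy=M^TM\ba+M^T\bc$ using $\bZ^T\bZ={\rm id}$, note the identical formulas in terms of $\hat{\bX},\hat{E},\ba$ in scenario~1, and conclude via the correspondence $M\equiv\hat{\bX}$, $\bc\equiv\hat{E}$, $\ba\equiv\ba$. Your packaging of this correspondence as a common measurable map $\Phi$ pushed forward from identically distributed independent input triples is just a more explicit rendering of the paper's final sentence.
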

\begin{proof}
We have $\widehat{\sx} = \hat{\bX}^T \hat{\bX}$ and $\sx = \bX^T \bX = M^T \bZ^T \bZ M = M^T M$,
where we have used that $\bZ$ has full rank due to the uncorrelatedness of its components. 
Likewise,
$\widehat{\sxe} = \hat{\bX}^T \hat{E}$ and $\sxe = (\bZ M)^T \bZ \bc = M^T \bc$. Further,
$\widehat{\sxy} = \hat{\bX}^T \bX \ba + \hat{\sxe}$ and $
\sxy = \bX^T \bX\ba + \sxe$.
Then the statement follows from the correspondences
$M \equiv \hat{\bX}$, $\bc \equiv \hat{E}$, $\ba \equiv \ba$.   
\end{proof}
Theorem~\ref{thm:equivalence} provides a canonical way to transfer methods for inferring the vector $\ba$ from empirical covariance matrices in scenario 1
to similar methods for inferring $\ba$ in scenario 2 from population covariance matrices.
Motivated by this insight we will now develop our `causal' Ridge and Lasso for the population case. To emphasize that this method
uses weaker assumptions than Theorem~\ref{thm:equivalence}, we will not strictly build on it and use a more abstract condition
that is only motivated by the concrete model above. 

\section{Bayesian justification for Ridge and Lasso in scenario 2 \label{sec:popRidgeLasso}}

We now define population versions of Ridge and Lasso that temper confounding in the same way as the usual versions temper overfitting.
\begin{eqnarray}
\tilde{\ba}^{\rm ridge}_\lambda &:=& {\rm arg min}_{\ba'} \{ \lambda \|\ba'\|^2_2 +  \|Y - \bX \ba'\|^2 \} \label{eq:ridgep}\\
\tilde{\ba}^{\rm lasso}_\lambda &:=& {\rm arg min}_{\ba'} \{ \lambda \|\ba'\|_1 + \|Y - \bX \ba'\|^2\}. \label{eq:lassop}
\end{eqnarray}
We briefly sketch standard Bayesian arguments for the finite sample versions \cite{Hoerl2000}. Let the prior distributions for $\ba$ be given by
\begin{eqnarray}
p_{\rm ridge}(\ba) &\sim & \exp \left( -\frac{1}{2\tau^2} \|\ba\|^2 \right) \label{eq:aridge}\\
p_{\rm lasso}(\ba) & \sim &  \exp \left(-\frac{1}{2\tau^2} \|\ba\|_1\right) \label{eq:alasso}.
\end{eqnarray}
If we assume that the noise variable $E$ is Gaussian with standard deviation $\sigma_E$ we obtain
\[ 
p(\by|\bx,\ba) \sim \exp \left( -\frac{1}{2\sigma_E^2}\|\by - \bx \ba\|^2 \right),
\]
which yields the posteriors
\begin{align}\label{eq:postridge}
\log p_{\rm ridge}(\ba | \hat{\bX},\hat{Y}) &\stackrel{+}{=}    -\frac{1}{2\tau^2} \|\ba\|^2 -\frac{1}{2\sigma_E^2}\|\hat{Y} - \hat{\bX} \ba\|^2\\
\log p_{\rm lasso}(\ba | \hat{\bX},\hat{\bY}) & \stackrel{+}{=}   -\frac{1}{2\tau^2} \|\ba\|_1 -\frac{1}{2\sigma_E^2}\|\hat{Y} - \hat{\bX} \ba\|^2, \label{eq:postlasso}
\end{align}
which are maximized for $\hat{\ba}_\lambda$ in \eqref{eq:ridge} and \eqref{eq:lasso}, respectively, after setting $\lambda = \sigma^2_E/\tau^2$ (here $\stackrel{+}{=}$ denotes equality up to an additive $\ba$-independent term).   

To derive the posterior for $\ba$ for scenario 2, we recall that now the entire distribution $P_{\bX,Y}$ is given. We also know that $\bX$ ad $Y$ are related by
$Y =\bX \ba +E$, but $\ba$ and $E$ are unknown. For $\ba$ we will adopt the priors  \eqref{eq:aridge} and \eqref{eq:alasso}, but to 
define a reasonable prior for  $E$ is less obvious. Note that we are not talking about a prior for the {\it values} attained by $E$. Instead, $E$ is an unknown vector
in the infinite dimensional Hilbert space $\cH$.  Fortunately, we do not need to specify
a prior for $E$, it is sufficient to specify a prior for the projection $E_\bX$ onto the image of $\bX$. We assume:
\begin{equation}\label{eq:EX}
E_\bX \sim \cN(0,\sigma_{E_\bX}^2 {\bf I}) 
\end{equation}
for some `confounding parameter' $\sigma_{E_\bX}^2$. 
This implies the following distribution for the projection $Y_\bX$ of $Y$ onto the image of $\bX$:
\[
p(Y_\bX|\bX,\ba) \sim  \exp \left( -\frac{1}{\sigma_{E_\bX}^2} \|Y_\bX - \bX \ba \|^2 \right).
\]
This way, we obtain the following posteriors for $\ba$: 
\begin{align}\label{eq:postridgep}
\log p_{\rm ridge}(\ba | \bX, Y_\bX)  & \stackrel{+}{=}-\frac{1}{2\tau^2} \|\ba\|^2 -\frac{1}{2\sigma_{E_\bX}^2}\|Y_\bX - \bX \ba\|^2 \\
\log p_{\rm lasso}(\ba | \bX, Y_\bX)  &\stackrel{+}{=}  -\frac{1}{2\tau^2} \|\ba\|_1 -\frac{1}{2\sigma_{E_\bX}^2}\|Y_\bX - \bX \ba\|^2 . \label{eq:postlassop}
\end{align}
After replacing $Y_\bX$ with $Y$ (which is irrelevant for the maximization) we observe that the posterior probabilities are maximized by \eqref{eq:ridgep} and
\eqref{eq:lassop} with $\lambda:=\sigma_{E_\bX}^2/\tau^2$. 

We phrase our findings as a theorem:
\begin{Theorem}[justification of population Ridge and Lasso] \label{thm:justPopulation} 
Given a $d$-dimensional random variable $\bX$ and a scalar random variable $Y$ for which $P_{\bX,Y}$ is known.
Let they be linked by
\[
Y = \bX \ba + E,
\]
where $\ba\in \R^d$ is unknown and $E$ is a random variable whose distribution is unknown.  
Assume \eqref{eq:aridge} and \eqref{eq:alasso} as priors for $\ba$, respectively. Assume that the projection $E_\bX$ of $E$ on the image of $\bX$  follows
the prior distribution 
\[
E_\bX \sim \exp \left(-\frac{1}{2 \sigma_{E_\bX}^2} \|E_\bX\|^2\right).
\]
 Then the posterior probability $p(\ba|P_{\bX,Y})$ is maximized by
the population Ridge and Lasso estimators \eqref{eq:ridgep} and \eqref{eq:lassop}, respectively, for $\lambda:=\sigma_{E_\bX}^2/\tau^2$.
\end{Theorem}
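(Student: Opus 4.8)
\emph{Proof plan.} The statement is essentially the Bayesian calculation sketched in the paragraphs immediately preceding it, so the plan is to make that calculation precise in three steps and to flag the single point that needs an argument rather than a routine computation.

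First I would reduce the ``data'' $P_{\bX,Y}$ to the pair $(\bX, Y_\bX)$, where $Y_\bX$ is the projection of $Y$ onto the image of $\bX$ in $\cH$. Knowing $P_{\bX,Y}$ determines the operator $\bX:\R^d\to\cH$ and the element $Y\in\cH$, hence $Y_\bX$; and from $Y=\bX\ba+E$ together with $\bX\ba\in\mathrm{image}(\bX)$ one gets the orthogonal decomposition $Y_\bX=\bX\ba+E_\bX$ and $Y-Y_\bX=E-E_\bX$. Since a prior is imposed only on $E_\bX$ while the complementary part $E-E_\bX$ is left as a free nuisance direction, the residual $Y-Y_\bX$ is consistent with every value of $\ba$ and therefore contributes a likelihood factor that is constant in $\ba$; this is why $p(\ba\mid P_{\bX,Y})=p(\ba\mid \bX,Y_\bX)$. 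I expect this reduction --- not any algebra --- to be the main thing that has to be justified carefully. I would also remark in passing that the ``Gaussian prior'' on $E_\bX$ raises no measure-theoretic issue, because $\mathrm{image}(\bX)$ has dimension at most $d$, so \eqref{eq:EX} is an ordinary finite-dimensional Gaussian.

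Next I would write down the likelihood and posterior. Conditionally on $\ba$ and $\bX$ we have $Y_\bX-\bX\ba=E_\bX\sim\cN(0,\sigma_{E_\bX}^2{\bf I})$, giving $p(Y_\bX\mid \bX,\ba)\propto\exp(-\tfrac{1}{2\sigma_{E_\bX}^2}\|Y_\bX-\bX\ba\|^2)$; multiplying by the priors \eqref{eq:aridge}/\eqref{eq:alasso} and taking logarithms yields \eqref{eq:postridgep}/\eqref{eq:postlassop} up to an $\ba$-independent additive constant, exactly as in the text.

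Finally I would identify the maximizer: maximizing \eqref{eq:postridgep} over $\ba$ is equivalent, after multiplying by $2\sigma_{E_\bX}^2$, to minimizing $\lambda\|\ba\|_2^2+\|Y_\bX-\bX\ba\|^2$ with $\lambda=\sigma_{E_\bX}^2/\tau^2$, and Pythagoras in $\cH$, $\|Y-\bX\ba\|^2=\|Y_\bX-\bX\ba\|^2+\|Y-Y_\bX\|^2$, lets me replace $Y_\bX$ by $Y$ without changing the argmin, producing exactly $\tilde{\ba}^{\rm ridge}_\lambda$ from \eqref{eq:ridgep}. The Lasso case is identical with $\|\ba\|_1$ in place of $\|\ba\|_2^2$, giving \eqref{eq:lassop}. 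None of these last steps is an obstacle; the only real content is the projection argument in the first step.
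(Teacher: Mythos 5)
Your proposal is correct and follows essentially the same route as the paper: project onto the image of $\bX$ to get $Y_\bX=\bX\ba+E_\bX$, form the Gaussian likelihood $p(Y_\bX\mid\bX,\ba)$, multiply by the prior, and use Pythagoras to replace $Y_\bX$ by $Y$ in the argmin. Your extra care in justifying why $p(\ba\mid P_{\bX,Y})=p(\ba\mid\bX,Y_\bX)$ (the orthogonal complement being a nuisance direction) and in noting that $\mathrm{image}(\bX)$ is finite-dimensional is a welcome tightening of a step the paper leaves implicit.
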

Here we decided to write
$p(\ba |P_{\bX,Y})$ instead of $p(\ba|\bX,Y)$ to avoid that it could be misunderstood as the conditional given {\it observations from} $\bX,Y$ instead of the entire
statistics\footnote{Actually, only the covariance matrices $\sx,\sxy$ matter, as shown in the appendix.}. In the derivations above it was convenient to keep them as similar to the finite sample case as possible by simply removing the  symbol $\hat{}$. 

%It would be nice to also have a finite sample version of scenario 2, but we didn't find a way to combine them nicely.
The results raise the questions how to select $\lambda$ for our population Ridge and Lasso.
First note that  information criteria like AIC and BIC  \citep{Bandyopadhyay2011} cannot be applied: since they require the sample size in scenario 1, they would require the number of sources in scenario 2, which we assume to be unknown (assuming it to be known seems to go too far away from real-world scenarios). 
To focus on another standard approach of choosing $\lambda$, note that transferring {\it cross-validation} \citep{Hastie2008} from scenario 1 to scenario 2 requires data from different distributions\footnote{as in `invariant prediction' \citep{Peters15}} 
(recall that drawing $\hat{\bX},\hat{E}$ corresponds to
drawing $M$ and $\bc$ 
in Section~\ref{sec:overfitconfound}), which we do not assume to be available here.
%\footnote{It is meanwhile more and more believed that {\it causal} model generalize better across datasets from different background conditions, see e.g. \citet{Peters15,causal_image_classification}.} 
Therefore we need to estimate the strength of confounding to choose
the regularization constant.

\section{Choosing the regularization constant by estimating confounding \label{sec:est}}
%we propose to run our causal Ridge and Lasso with large values of $\lambda$
%whenever confounding is strong. 
The only approaches that directly estimate the strength of confounding\footnote{\citet{HoyerLatent08} construct confounders for linear non-Gaussian
models and \citet{UAI_CAN} infer confounders of univariate $X,Y$ subject to the additive noise assumption.} from $P_{\bX,Y}$ alone 
we are aware of are given by \citet{multivariateConfound,JanSch18}.
The first paper considers only one-dimensional confounders, which is complementary to our confounding scenario, while we will use the approach from the second paper 
because it perfectly matches our scenario 2 in Section~\ref{sec:overfitconfound} with fixed $M$. 
\citet{JanSch18} use 
 the slightly stronger assumption that $\ba$ and $\bc$ are drawn from $\cN(0,\sigma^2_a I)$ and $\cN(0,\sigma^2_c I)$, respectively. We briefly rephrase the method.
 
The idea is that the unregularized regression vector
$\tilde{\ba}$ in \eqref{eq:atilde} 
follows the distribution $\cN(0,\sigma_a^2 {\bf I} + \sigma_c^2 M^{-1}M^{-T})$. This results from 
\[
\tilde{\ba} = \ba + X^{-1} E = \ba + M^{-1} \bc,
\]
(see proof of Theorem~1 by \citet{JanSch18}). 
Then the quotient $\sigma_c^2/\sigma^2_a$   
can be inferred from the direction of $\hat{\ba}$ (intuitively: the more $\hat{\ba}$ concentrates in small eigenvalue eigenspaces of $\sx=M^T M$, the larger is this quotient).
Using some approximations that hold for large $d$, the confounding strength
\begin{equation}\label{eq:beta}
\beta:= \frac{\|\tilde{\ba}-\ba\|^2}{\|\tilde{\ba}-\ba\|^2 + \|\ba\|^2} \in [0,1]
\end{equation}
can be estimated from the input $(\widehat{\sx},\ba')$.
\citet{JanSch18} already observed the analogy between overfitting and confounding and also used the algorithm to estimate overfitting in scenario 1, which inspired this work.  
Using the approximation $\|\tilde{\ba}-\ba\|^2 + \|\ba\|^2\approx \|\tilde{\ba}\|^2$ \cite{multivariateConfound}, we have
$
\|\ba\|^2 \approx (1-\beta) \cdot \|\tilde{\ba}\|^2.
$
Hence, the length of the true causal regression vector $\ba$ can be estimated from the length $\tilde{\ba}$. 
This way, we can adjust $\lambda$ such that $\|\hat{\ba}_\lambda\|$ coincides with the estimated length. Since the estimation is based on a Gaussian (and not a Laplacian) prior for $\ba$,
it seems more appropriate to combine it with Ridge regression than with Lasso. However, since Lasso regression is known to have important advantages\footnote{\citet{Tibshirani2015} claim, for instance, ``If $\ell_2$ was the norm of the 20th century, then $\ell_1$ is the norm of the 21st century ... OK, maybe that statement is a bit dramatic, but at least so far, there’s been a frenzy of research involving the $\ell_1$ norm and its sparsity-inducing properties....''}  
(e.g. that sparse solutions yield more interpretable results), we also use Lasso. 
After all, the qualitative statement that strong confounding amounts to vectors $\hat{\ba}$ that tend to concentrate in low eigenvalue subspaces of $\sx$ still hold true
when $\bc$ is chosen from an isotropic prior. 

Confounding estimation via the algorithm of \citet{JanSch18} requires the problematic decision of whether the variables $X_j$ should be rescaled to variance $1$. 
If different $X_j$ refer to different units, there is no other straightforward choice of the scale. It is not recommended, however, to always normalize $X_j$. If $\sx$ is diagonal, for instance, the method would be entirely spoiled by normalization. 
The difficulty of deciding whether data should be renormalizing as an additional preprocessing step will be inherited by our algorithm. 

Our confounder correction algorithm reads:

%\begin{algorithm}[!htb]
%\label{alg:eststrength}

\vspace{0.2cm}
{\bf ConCorr}
\begin{algorithmic}[1]
  \STATE {\bfseries Input:} I.i.d. samples from
$P(\bX,Y)$.  

\STATE Rescale $X_j$ to variance $1$ if desired.

\STATE Compute the empirical covariance matrices
$\widehat{\sx}$ and $\widehat{\Sigma_{\bX Y}}$
   \STATE Compute the regression vector 
$
\hat{\ba}:=\widehat{\sx}^{-1} \widehat{\Sigma_{\bX Y}
}$

\STATE Compute an estimator $\hat{\beta}$ for the confounding strength $\beta$ via the algorithm in \cite{JanSch18} from  $\widehat{\sx}$ and 
$\hat{\ba}$ and estimate the squared length of $\ba$ via
\begin{equation}\label{eq:sqlengtha}
\|\ba\|^2 \approx (1-\hat{\beta}) \|\hat{\ba}\|^2
\end{equation}
\STATE
find $\lambda$ such that the squared length of  $\hat{\ba}^{\tt lasso/ridge}_\lambda$ coincides with the right hand side of \eqref{eq:sqlengtha}

\STATE Compute Ridge or Lasso regression model using this value of $\lambda$ 

   \STATE {\bfseries Output:} Regularized regression vector $\ba_\lambda$ 
\end{algorithmic}
%\end{algorithm}

\section{Experiments \label{sec:exp}}

\subsection{Simulated data}
We have generated data in a way that admits moving between scenarios 1 and 2 by simply changing some parameters.
For some fixed values of $d$ and $\ell$, we generate one mixing matrix $M$ in each run by drawing its entries from the standard normal distribution. In each run we generate $n$ instances of the $\ell$-dimensional standard normal random vector $\bZ$ and compute the $\bX$ values by $\bX = \bZ M$.  Then we choose $\sigma_c$, the parameter that
crucially controls confounding: for $\sigma_c=0$, we obtain scenario 1. For scenario 2, we choose $\sigma_c$ uniformly at random from $[0,1]$. Likewise, we draw $\sigma_a$, the parameter that controls the strength of the causal influence, uniformly at random from $[0,1]$. 
Then we draw the entries of $\bc$ and $\ba$ from $\cN(0,\sigma_c^2)$ and $\cN(0,\sigma^2_a)$, respectively, and compute the values of $Y$ via
$
Y = \bX \ba + \bZ \bc +E,
$
where $E$ is random noise drawn from $\cN(0,\sigma^2_E)$ (the parameter $\sigma_E$ 
% called F earlier and in the code!
has previously been chosen uniformly at random from $[0,5]$, which yields quite noisy data). While such a  noise term didn't exist in our description of scenario 2,
we add it here to study typical finite sample effects (without noise, $Y$  depends deterministically on $\bX$ for $\ell\leq d$).

To evaluate the  performance of causal regularization we define the relative squared error of any regression vector $\ba'$ by 
\[
\epsilon_{\ba'} := \frac{\|\ba' -\ba\|^2}{\|\ba' -\ba\|^2 + \|\ba\|^2}.    
\]
Note that $\|\ba' -\ba\|^2 + \|\ba\|^2\approx \|\ba'\|^2$ whenever the error $\|\ba' -\ba\|^2$ is close to orthogonal to $\ba$, which is a priori likely for vectors in high dimensions. Then $\epsilon_{\ba'} \approx \|\ba' -\ba\|^2/ \|\ba'\|^2$, which justifies the name `relative squared error'.    

For the special case where $\ba'$ is the unregularized regression vector $\hat{\ba}$ in \eqref{eq:lsq}, we define
\[
\epsilon_{\rm unreg} := \frac{\|\hat{\ba} -\ba\|^2}{\|\hat{\ba} -\ba\|^2 + \|\ba\|^2},
\]
by slightly overloading notation.  In the infinite sample limit $\epsilon_{\rm unreg}$ converges to the confounding strength  $\beta$, see \eqref{eq:beta}.

We begin with the unconfounded case $\sigma_c=0$ with $d=\ell=30$ and $n=50$. Figures~\ref{fig:myridge_cvridge}, left and right, show the relative squared errors obtained by our method ConCorr over the 
unregularized errors. The red and green lines show two different baselines: first, the unregularized error, and second, the error $1/2$ obtained by the trivial regression vector $0$. The goal is to stay below both baselines. Apart from those two trivial baselines, another natural baseline is regularized regression where $\lambda$ is chosen by cross-validation, because this would be the default approach for the unconfounded case. We have used leave-one-out CV from the {\tt Python} package {\tt scikit} for Ridge and Lasso, respectively.   
\begin{figure}
\centerline{
\includegraphics[width=0.2\textwidth]{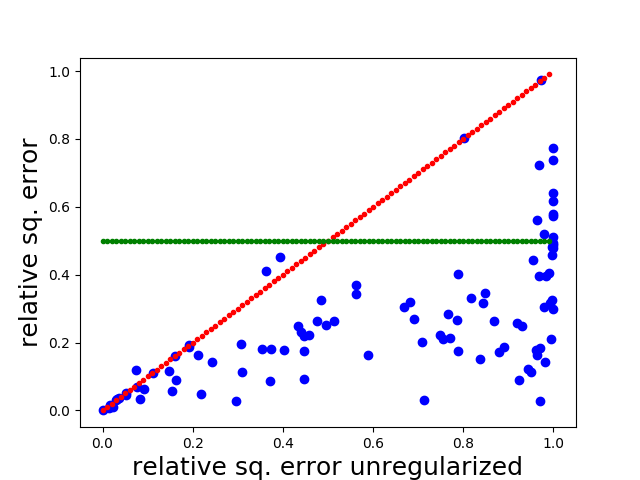}
\includegraphics[width=0.2\textwidth]{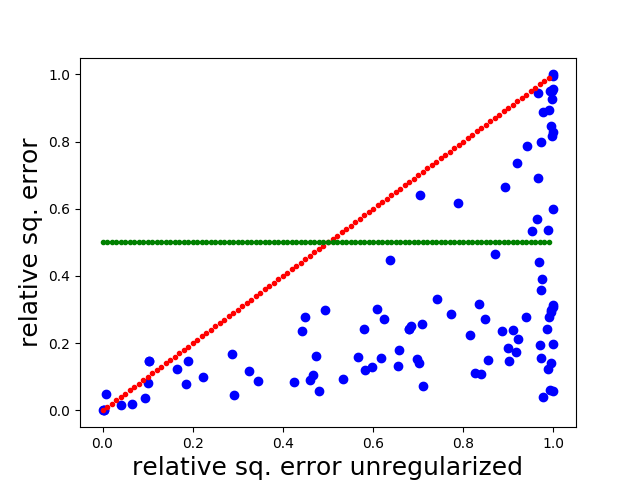}
}
\centerline{
\includegraphics[width=0.2\textwidth]{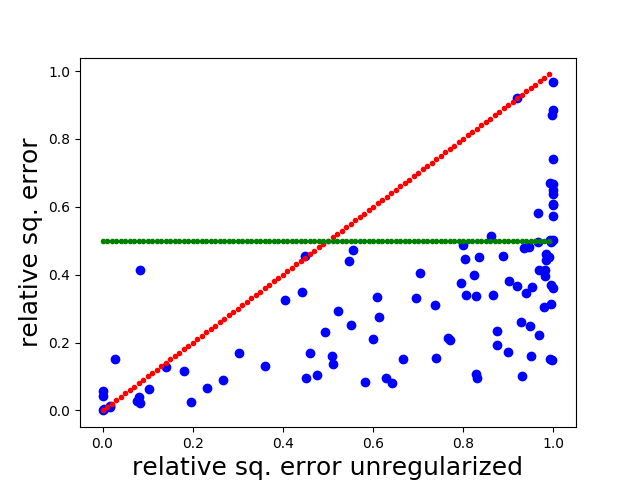}
\includegraphics[width=0.2\textwidth]{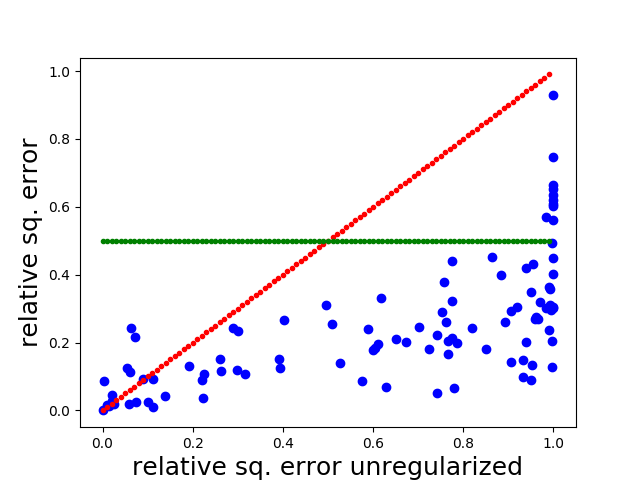}
}
\caption{\label{fig:myridge_cvridge} Results for Ridge (top) and Lasso (bottom) regression with ConCorr (left) versus cross-validated version (right) for the unconfounded case where artifacts are only due to overfitting. The results are roughly the same.}
\end{figure}
To quantitatively evaluate the performance, we have defined the {\it success rate} as the fraction of cases in which the relative squared error is at least by $0.05$ below both baselines\footnote{Note that this is a quite demanding criterion for success because there is no obvious way to decide which one of the two baseline methods performs better when $\ba$ is not known.}, the unregularized relative squared error and the value $1/2$. Likewise, we define the {\it failure rate} as the fraction of cases where the relative squared error is by $0.05$ larger than at least one of the baselines.  
We obtained the following results: 

\begin{tabular}{lcc}
{\bf method} & {\bf successes} & {\bf failures}\\
ConCorr Ridge/Lasso & $0.63/0.61$ & $0.11/0.16$\\
CV Ridge/Lasso & $0.65/0.72$ & $0.21/0.15$ \\
%ConCorr Lasso & $0.61$ & $0.16$ \\
%CV Lasso & $0.72$ & $0.15$
\end{tabular}

%In the case of Ridge regression, ConCorr works slightly better than CV, which suggests that for our generating model ConCorr detects 
%overfitting sufficiently well to select $\lambda$. Since inferring overfitting via ConCorr requires Gaussian noise we would not derive the recommendation
%to prefer ConCorr to CV for the unconfounded scenario. 
The results are roughly comparable, if we abstain from over-interpretations.  In the regime where the unregularized relative squared error is around $1/2$, all $4$ methods yield errors that are most of the time significantly lower. All $4$  methods have problems in the regime where the unregularized error is close to $1$ and sometimes
regularize to little for these cases.
 
To test the performances for scenario 2, we considered the  large sample case ($n=1000$, $d=30$) with confounding, where we obtained the results in Figure~\ref{fig:myridge_cvridge_con}. 
\begin{figure}
\centerline{
\includegraphics[width=0.2\textwidth]{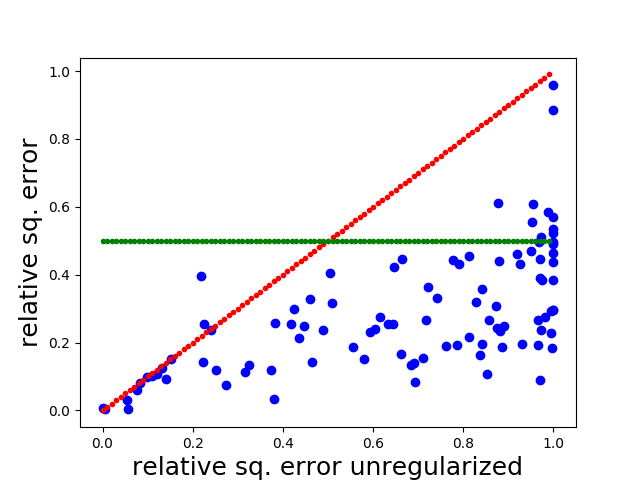}
\includegraphics[width=0.2\textwidth]{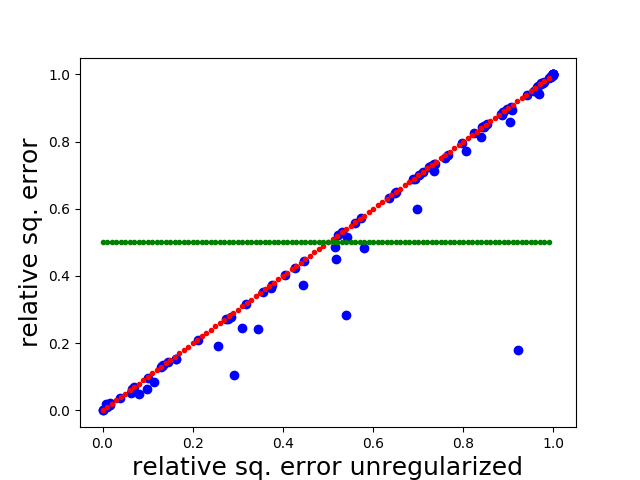}
}
\centerline{
\includegraphics[width=0.2\textwidth]{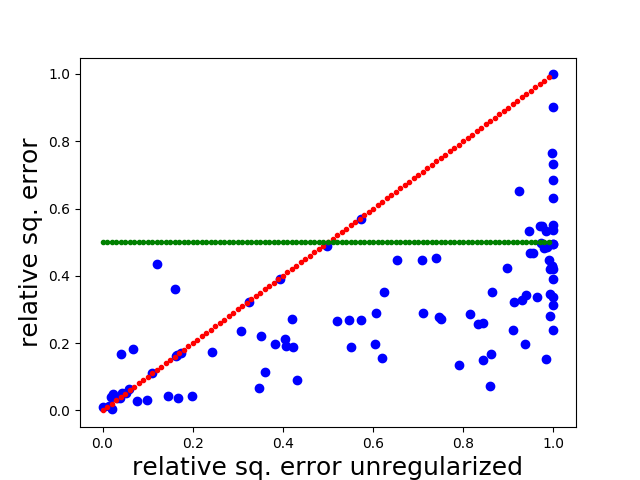}
\includegraphics[width=0.2\textwidth]{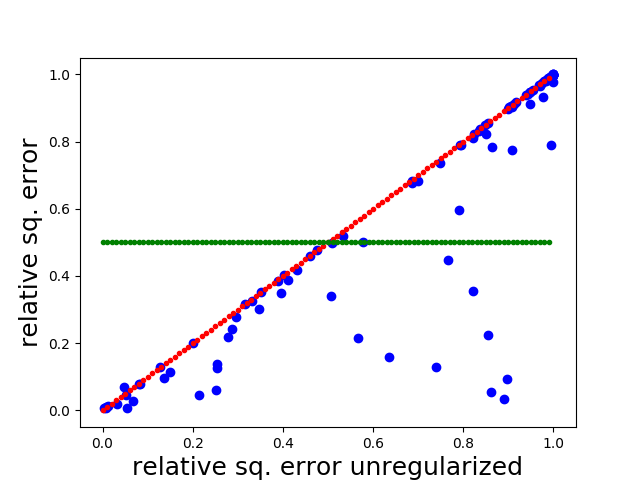}
}
\caption{\label{fig:myridge_cvridge_con} Results for Ridge (top) and Lasso (bottom) regression with ConCorr (left) versus cross-validated version (right) for the confounded case with large sample size where artifacts are almost only due to confounding. }
\end{figure}
Remarkably, ConCorr performed quite quite well also with Lasso regression although the Laplacian prior  of Lasso does not match our data generating process where the vector $\ba$ has been chosen from a Gaussian. 
One may argue that experiments for the confounded large sample regime are pointless since our theory states the equivalence of scenario 1 and 2. We show the experiments nevertheless for two reasons. First, it is not obvious which sample size approximates the population limit sufficiently well, and second,
 we have, by purpose, not chosen the parameters for generating $M$ according to the theoretical correspondence in order not to repeat equivalent experiments. The success and failure rates read:

\begin{tabular}{lcc}
{\bf method} & {\bf successes} & {\bf failures}\\
ConCorr Ridge/Lasso & $0.69/0.55$ & $0.08/0.13$\\
CV Ridge/Lasso & $0.08/0.15$ & $0.56/0.54$ \\
%ConCorr Lasso & $0.55$ & $0.13$ \\
%CV Lasso & $0.15$ & $0.54$
\end{tabular}

\begin{figure}
\centerline{
\includegraphics[width=0.2\textwidth]{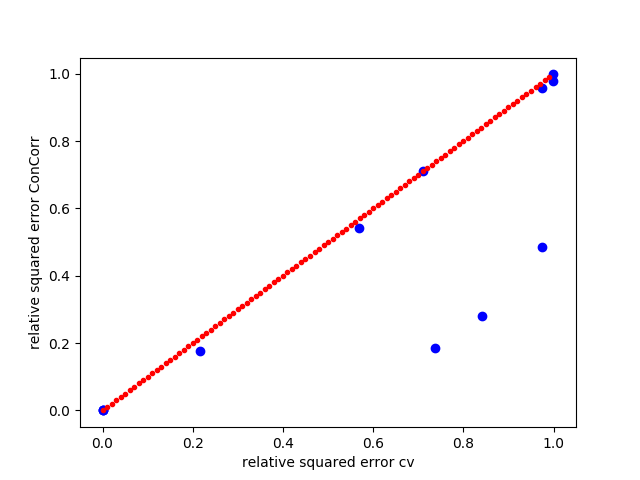}
\includegraphics[width=0.2\textwidth]{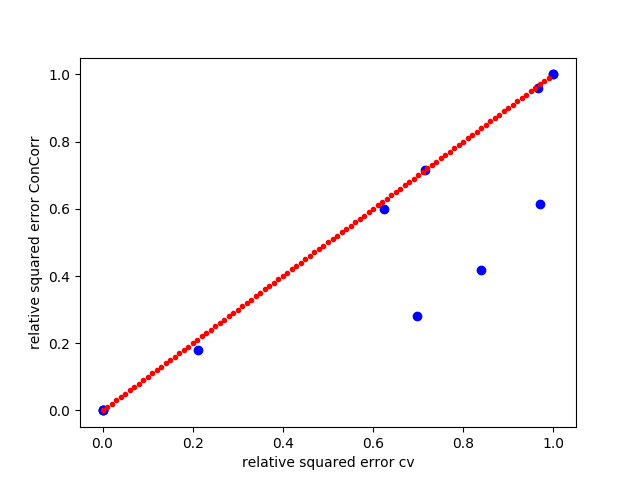}}
\caption{\label{fig:optical} Results for Ridge (left) and Lasso (right) regression for the data from the optical device in \citet{JanSch18}. The $y$-axis is the relative squared error achieved by ConCorr, while the $x$-axis is the cross-validated baseline.}
\end{figure}

Here, ConCorr clearly outperforms cross-validation (for both Ridge and Lasso), which shows that cross-validation
 regularizes too weakly for causal modelling, as expected.
One should add, however, that we increased the number of iterations in the $\lambda$-optimization to get closer to optimal leave-one-out performance
since the default parameters of {\tt scikit} already resulted in regularizing more strongly than that. 
%\footnote{`Early stopping' also regularizes, although 
%here the $\lambda$-optimization is stopped earlier instead of the regression itself as in \citep{Raskutti2011}.}
%This is consistent with the general insight that bounding running time also has a regularizing effect, although it is less easy to understand theoretically. 
Note that the goal of this paper is not to show that ConCorr outperforms other methods. Instead, we want to argue that for causal models 
it is often recommended to regularize more strongly than 
criteria of {\it statistical predictability} suggests. If `early stopping' in common CV algorithms also yields stronger regularization,\footnote{See also \citep{Raskutti2011} for regularization by early stopping in a different context.} 
%by early stopping, 
this can be equally helpful for causal inference, although
the way ConCorr choses $\lambda$ is less arbitrary than just bounding the number of iterations. 

We briefly mention results for the confounded case with small sample size, a regime for which we have no theoretical results. 
%The success and failure rates read:
%\begin{tabular}{lcc}
%{\bf method} & {\bf successes} & {\bf failures}\\
%ConCorr Ridge & $0.66$ & $0.14$\\
%CV Ridge & $0.20$ & $0.54$ \\
%ConCorr Lasso & $0.65$ & $0.13$ \\
%CV Lasso & $0.59$ & $0.23$
%\end{tabular}
Here, CV Lasso performs comparably to ConCorr, which is probably due to the strong finite sample effects.

We also checked how the performance depends on the dimension, but one should not overestimate the value of these experiments since
the estimation of confounding already depends heavily on  the distribution of eigenvalues of $\sx$.

\subsection{Real data}

To get confounded real data with $\bX$ and $Y$ being linked by a linear causal relation with {\it known} regression vector is not easy.
One approach is to restrict an unconfounded model
to a subset of variables: whenever $Y=\bX\ba +E$ is unconfounded, the statistical relation between $Y$ and a subset of $\bX$ can become confounded
by dropping parts of $\bX$  that influence  $Y$ (if the dropped components and the remaining ones have a common cause or the dropped ones influence the remaining ones).  
The true causal regression vector for the reduced system is given by simply reducing $\ba$ to the respective components (if the sample size is large enough 
to avoid overfitting, which we assume below in agreement with \cite{JanSch18}). 
However, to find multivariate data that is known to be unconfounded is difficult too.
 
 \paragraph{Optical device}  For this reason, \citep{JanSch18} have build an optical device where the screen of a Laptop shows an image with extremely low resolution (in their case $3\times 3$-pixel\footnote{In order to avoid overfitting issues \citet{JanSch18} decided to only generate low-dimensional data with $d$ around $10$.}) captured from a webcam. In front of the screen they mounted a photodiode measuring the light intensity $Y$, which is mainly influenced by the pixel vector $\bX$ of the image.  

As confounder $W$ they generated a random voltage controlling two LEDs, one in front of the webcam (and thus influencing $\bX$) and the second one in front of the
photodiode (thus influencing $Y$).  Since $W$ is also measured, the vector $\ba_{\bX,W}$ obtained by regressing $Y$ on $(\bX,W)$ is causal (no confounders by construction), if one accepts the linearity assumption. Dropping $W$ yielded significant confounding, with $\beta$ ranging from $0$ to $1$. We applied ConCorr to $\bX,Y$ and compared the output with the ground truth.
Figure~\ref{fig:optical}, left and right, show the results for Ridge and Lasso, respectively. 
The point $(0,0)$ happened to be met by three cases, where no improvement was possible. Fortunately, ConCorr did not make the result worse.
One can see that in $3$ out of the remaining nine cases (note that the point $(1,1)$ is also met by two cases),  ConCorr significantly improved the causal prediction. Fortunately, there is no case where ConCorr is worse than the baseline.

\paragraph{Taste of wine}
\citep{JanSch18}, moreover, used a dataset from the UCI machine learning repository \citet{Newman1998} of which they believe that it is almost unconfounded. $\bX$ contains $11$
ingredients of different sorts of red wine and $Y$ is the taste assigned by human subjects. Regressing $Y$ on $\bX$ yields a regression vector for which
the ingredient {\tt alcohol} dominates. Since alcohol strongly correlates with some of the other ingredients, dropping it amounts to significant confounding (assuming that the correlations between alcohol and the other ingredients is due to common causes and not due to the influence of alcohol on the others).

After normalizing the ingredients\footnote{Note that \citet{JanSch18} also used normalization to achieve reasonable estimates of confounding for this case.}, 
ConCorr with Ridge and Lasso yielded  a relative error of  $0.45$ and $0.35$, respectively, while \cite{JanSch18} computed the confounding strength $\beta\approx 0.8$, which means that ConCorr significantly corrects for confounding (we confirmed that CV also yielded errors close to $0.8$ which
suggests that finite sample effects did not matter for the error).

\section{Learning theory on `generalization' from observational to interventional distributions \label{sec:learningtheory}}

So far, we have motivated causal regularization mainly via transferring Bayesian arguments for regularization from scenario 1 to scenario 2. An alternative
perspective on regularization is provided by statistical learning theory \citep{Vapnik}. Generalization bounds guarantee that the {\it expected} error is unlikely to significantly exceed the {\it empirical} error for any regression function $f$ from a not too rich class $\cF$.
We will argue that our analogy between overfitting and confounding can be further extended to translate
generalization bounds in a way that they bound the error made by the {\it causal interpretation of regression models} when they are taken from a not too rich model class.
To make the analogy as natural as possible, we rephrase usual generalization bounds as:
\begin{center}
error of $f$ w.r.t. true (observational) distribution \\
$\leq$ error of $f$ w.r.t. empirical distribution + $C(\cF)$,
\end{center}
where $C(\cF)$ is some `capacity' term that accounts for the richness of the class $\cF$. 
%We have argued in Section~\ref{sec:overfitconfound} that inferring properties of the true distribution from the empirical one
%in scenario 1 is analog to inferring properties of the interventional distribution from the observational distribution in scenario 2.  
%Accordingly, we
Then we 
expect, subject to some conditions on the confounder, `causal generalization bounds' of the form\footnote{This kind of `causal learning theory' should not be confused with the one developed in \cite{Lopez2015} which considers algorithms
that infer cause vs effect from bivariate distributions after getting sufficiently many data sets with cause-effect pairs as training data. 
The cause-effect problem then reduces to a binary classification problem with bivariate empirical distribution as feature.
Our learning theory deals with a single data set.}
:
\newpage
\begin{center}
error of $f$ w.r.t. interventional distribution\\
$\leq$ error of $f$ w.r.t. observational distribution + $C(\cF)$.
\end{center}

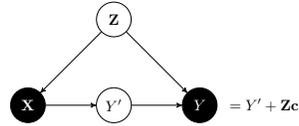
\begin{figure}
\begin{center}
\resizebox{4cm}{!}{
    \begin{tikzpicture}
    \node[obs] at (0,0) (X) {$\bX$} ;
   % \node[anchor=center] at (-1.2,0) {$\bZ M =$};
    \node[var] at (2,0) (Y') {$Y'$} edge[<-] (X);
    \node[obs] at (4,0) (Y) {$Y$} edge[<-] (Y');
    \node[anchor=center] at (5.5,0) {$ = Y' + \bZ\bc$};
    \node[var] at (2,2) (Z) {$\bZ$} edge[->] (X) edge[->] (Y); 
   % \node[anchor=center] at (-1.3,2) {$\cN(0,I)  \sim$};
  \end{tikzpicture}
}
\caption{\label{fig:confGen} Our confounding scenario: the high-dimensional common cause $\bZ$ influences $Y$ in a linear additive way, while the influence on $\bX$ is arbitrary.} 
\end{center}
\end{figure}

Figure~\ref{fig:confGen} shows our confounding model that significantly generalizes our previous models. $\bZ$ and $\bX$ are arbitrary random variables of dimensions $\ell$ and $d$, respectively. 
Apart from the graphical structure, we only add the
parametric assumption that the influence of $\bZ$ on $\bY$ is linear additive:
\begin{equation}\label{eq:shiftmodel}
Y =  Y' + \bZ\bc,
\end{equation}
where $\bc\in \R^{\ell}$.   
%While $Y'$ is unobservable, the observable implication of \eqref{eq:shiftmodel} reads:
%\begin{equation}\label{eq:shiftmodelCon}
%(Y - \bZ \bc)   \independent \bZ \,| \bX.
%\end{equation}
The change of $Y$ caused by setting $\bX$ to $\bx$ via interventions is given by Pearl's backdoor criterion \citep{Pearl:00} via
\[
p(y| do(\bx)) = \int p(y|\bx,\bz)p(\bz) d\bz.  
\]
For any function $f:\R^d \to \R$ we want to quantify how well it captures the behavior of $Y$ under interventions on $\bX$ and introduce the
{\it interventional loss}
\begin{equation}\label{eq:definter}
\Exp_{do(\bX)} [ (Y - f(\bX))^2 ] := \int (y - f(\bx))^2  p(y|do(\bx)) p(\bx) d\bx. 
\end{equation}
We want to compare it to the {\it observational loss}
\begin{equation}\label{eq:defobser}
\Exp [ (Y - f(\bX))^2 ] = \int (y - f(\bx))^2   p(y|\bx) p(\bx) d\bx. 
\end{equation}
In other words, we compare the expectations of the random variable $(Y-f(\bX))^2$ w.r.t.~ the distributions
$
p(y,\bx)$ and $p(y|do(\bx)) p(\bx)$.
The appendix shows that the difference between \eqref{eq:definter} and \eqref{eq:defobser} can be concisely phrases in terms of covariances:
\begin{Lemma}[interventional minus observational loss]
Let $g(\bx):= \Exp[Y'|\bx]$. Then 
\[
\Exp_{do(\bX)} [ (Y - f(\bX))^2 ]  - \Exp [ (Y - f(\bX))^2 ] =
(\Sigma_{(f- g)(\bX) \bZ}) \bc.
\] 
\end{Lemma}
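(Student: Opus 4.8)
The plan is to read both quantities in the lemma as expectations of the \emph{same} random variable $(Y-f(\bX))^2$, but with respect to two different joint laws of $(\bX,\bZ,Y)$, and then to isolate the single term that is sensitive to the difference between these laws. The observational loss \eqref{eq:defobser} is its expectation under $p(\bx,\bz,y)$. For the interventional loss \eqref{eq:definter}, the backdoor formula $p(y|do(\bx))=\int p(y|\bx,\bz)p(\bz)\,d\bz$ identifies $p(y|do(\bx))p(\bx)$ with the $(\bX,Y)$-marginal of
\[
\tilde p(\bx,\bz,y) := p(\bx)\,p(\bz)\,p(y\mid\bx,\bz),
\]
so the interventional loss equals $\Exp_{\tilde p}[(Y-f(\bX))^2]$. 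Replacing $p$ by $\tilde p$ only swaps $p(\bz\mid\bx)$ for $p(\bz)$, i.e.\ decouples $\bX$ from $\bZ$; the marginal of $\bX$, the marginal of $\bZ$, and the conditional $p(y\mid\bx,\bz)$ are all preserved.

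First I would record the additive structure of Figure~\ref{fig:confGen}. Since $\bZ$ is not a parent of $Y'$, d-separation gives $Y'\independent\bZ\mid\bX$, hence $\Exp[Y'\mid\bX,\bZ]=\Exp[Y'\mid\bX]=g(\bX)$. Writing $N:=Y'-g(\bX)$ and using \eqref{eq:shiftmodel},
\[
Y-f(\bX) = \big(g(\bX)-f(\bX)\big)+N+\bZ\bc =: W+\bZ\bc, \qquad \Exp[N\mid\bX,\bZ]=0 .
\]
Here $W$ is a function of $(\bX,N)$ alone, and the joint law of $(\bX,N)$ is untouched by the intervention (it changes neither the marginal of $\bX$ nor the conditional law of $Y'$, hence of $N$, given $\bX$); moreover ${\rm cov}(N,\bZ)=0$ because $\Exp[N\mid\bX,\bZ]=0$.

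Next I would expand $(Y-f(\bX))^2 = W^2 + 2\,W(\bZ\bc) + (\bZ\bc)^2$ and take expectations under $p$ and under $\tilde p$. The diagonal terms $\Exp[W^2]$ and $\Exp[(\bZ\bc)^2]$ depend only on quantities preserved by the intervention (the joint law of $(\bX,N)$ and the marginal of $\bZ$), so they cancel in the difference, leaving
\[
\Exp_{\tilde p}[(Y-f(\bX))^2] - \Exp_{p}[(Y-f(\bX))^2] = 2\big(\Exp_{\tilde p}[W\,\bZ\bc] - \Exp_{p}[W\,\bZ\bc]\big).
\]
Under $\tilde p$ one has $W\independent\bZ$, so $\Exp_{\tilde p}[W\,\bZ\bc]=\Exp[W]\,\Exp[\bZ]\,\bc$; subtracting and using ${\rm cov}(N,\bZ)=0$ yields $-2\,{\rm cov}\big(g(\bX)-f(\bX),\bZ\big)\bc = 2\,\Sigma_{(f-g)(\bX)\bZ}\,\bc$ --- the covariance expression of the statement, with the factor $2$ supplied by the cross term in the square.

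The only genuinely non-mechanical step --- everything else being this bilinear bookkeeping --- is justifying the description of $\tilde p$: that $do(\bX)$ restores the marginal of $\bZ$, makes $\bX\independent\bZ$, and yet leaves the conditional law of $Y'$ given $\bX$ and the marginal of $\bX$ unchanged. This is precisely where the special form of Figure~\ref{fig:confGen} enters, through the conditional independence $Y'\independent\bZ\mid\bX$ and the linear-additive influence \eqref{eq:shiftmodel} of $\bZ$ on $Y$. With those in hand, the only quantity that can differ between the observational and interventional regimes is the $\bX$--$\bZ$ cross-covariance, which is exactly the footprint of confounding.
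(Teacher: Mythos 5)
Your proof is correct and is essentially the paper's argument in different clothing: the backdoor formula identifies the interventional loss as the expectation of $(Y-f(\bX))^2$ under the decoupled joint $p(\bx)p(\bz)p(y\mid\bx,\bz)$, the d-separation fact $Y'\independent \bZ\mid\bX$ reduces $\Exp[Y'\mid\bx,\bz]$ to $g(\bx)$, and expanding the square shows that only the $\bX$--$\bZ$ cross term feels the difference between the two laws. Your covariance bookkeeping with $W=(g-f)(\bX)+N$ and the paper's integration of the expanded conditional expectation against $p(\bx,\bz)-p(\bx)p(\bz)$ are the same computation. One substantive remark: the factor $2$ you obtain is genuinely there. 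The paper's appendix expansion of $\Exp[(Y'+\bz\bc-f(\bx))^2\mid\bx,\bz]$ silently drops the factors of $2$ on all three cross terms of the trinomial square, so the correct right-hand side of the lemma is $2\,\Sigma_{(f-g)(\bX)\bZ}\,\bc$, exactly as your derivation gives. You should say plainly that your result differs from the printed statement by this factor rather than describing it as ``the covariance expression of the statement''; the discrepancy is harmless for the qualitative conclusions but it doubles the capacity term in the bound of Theorem~3, whose proof inherits the same omission.
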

For every single $f$, the vector  $\Sigma_{(f-g)(\bX)\bZ}$ is likely to be almost orthogonal 
to
$\bc$ if $\bc$ is randomly drawn from a rotation invariant distribution in $\R^\ell$. In order to derive statements of this kind that hold {\it uniformly} for all functions 
from a function class $\cF$ we introduce the following concept quantifying 
the richness of $\cF$:
\begin{Definition}[correlation dimension]
Let $\cF$ be some class of functions $f:\R^d \to \R$. 
Given the distribution $P_{\bX,\bZ}$, the correlation dimension $d_{\rm corr}$ of $\cF$
is the dimension of the span of
\[
\{ \Sigma_{f(\bX) \bZ} \,\,| f\in \cF \}. 
\]
\end{Definition}
To intuitively understand this concept it is instructive to consider the following immediate bounds:
\begin{Lemma}[bounds on correlation dimension]
The correlation dimension of $\cF$ is bounded from above by the dimension of the span of $\cF$.
 Moreover, if $\cF$ consists of linear functions, another upper bound is given by the rank of $\Sigma_{\bX \bZ}$. 
\end{Lemma}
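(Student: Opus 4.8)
The plan is to exploit the single structural fact behind the definition of correlation dimension: the assignment $f\mapsto\Sigma_{f(\bX)\bZ}$ is \emph{linear} in $f$. Since $(\alpha f+\beta h)(\bX)=\alpha f(\bX)+\beta h(\bX)$ and covariance is bilinear, $\Sigma_{(\alpha f+\beta h)(\bX)\bZ}=\alpha\Sigma_{f(\bX)\bZ}+\beta\Sigma_{h(\bX)\bZ}$. So the first step is to introduce the linear map $L$ that sends a function $g$ (with $g(\bX)$ of finite variance, so that the covariance vector is defined) to $\Sigma_{g(\bX)\bZ}\in\R^\ell$, and to note that $L$ is well defined on the linear span of $\cF$, even though $\cF$ itself need not be a subspace. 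Then the span of $\{\Sigma_{f(\bX)\bZ}\mid f\in\cF\}$ is exactly the image $L(\mathrm{span}\,\cF)$, and since the image of a linear map has dimension at most that of its domain, we get $d_{\rm corr}\le\dim\mathrm{span}\,\cF$. (If $\mathrm{span}\,\cF$ is infinite-dimensional this bound is vacuous but still correct.)

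For the refinement when $\cF$ consists of linear functions, I would write $f(\bx)=\bx\bof$ with $\bof=\bof(f)\in\R^d$ and compute componentwise ${\rm cov}(f(\bX),Z_k)=\sum_j f_j\,{\rm cov}(X_j,Z_k)$, i.e.\ $\Sigma_{f(\bX)\bZ}=\bof^T\Sigma_{\bX\bZ}$. Hence every such vector lies in the row space of the $d\times\ell$ matrix $\Sigma_{\bX\bZ}$, so their span does too, giving $d_{\rm corr}\le\mathrm{rank}\,\Sigma_{\bX\bZ}$. In the language of the first part: restricted to linear functions, $L$ factors through $\bof\mapsto\bof^T\Sigma_{\bX\bZ}$, a map whose rank is $\mathrm{rank}\,\Sigma_{\bX\bZ}$.

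There is no genuine obstacle here — the paper itself calls these bounds ``immediate'' — so the main work is bookkeeping rather than a hard step. The points I would make explicit are: that one should pass to $\mathrm{span}\,\cF$ so that $L$ has a linear domain, while the span of the image is unchanged; that the functions considered must satisfy $f(\bX)\in\cH$ for the covariance vectors to exist, which is implicit in the setup; and that centering of $f(\bX)$ is irrelevant because covariance is shift-invariant.
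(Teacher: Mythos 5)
Your proof is correct: the linearity of $f\mapsto\Sigma_{f(\bX)\bZ}$ gives the span bound, and the factorization $\Sigma_{f(\bX)\bZ}=\bof^T\Sigma_{\bX\bZ}$ for linear $f$ gives the rank bound. The paper states these bounds as immediate and offers no proof, and your argument is exactly the intended one, with the minor hygiene points (passing to $\mathrm{span}\,\cF$, requiring $f(\bX)\in\cH$) spelled out correctly.
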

%We will later need the following terminology.
%\begin{Definition}[random linear combination with fixed variance]
%Let $\bZ$ be an $\ell$-dimensional variable with covariance matrix $\bZ$. Then a `random linear combination' with variance $V$ is
%given s follows. Define $\bZ':= V^{1/2} (\Sigma_\bZ)^{-1/2} \bZ$, then draw a vector $\bc\in \R^\ell$ uniformly from the unit sphere   
%(in the sense of the Haar measure for the orthogonal group $O(\ell)$). Then define the `random linear combination' $W:= \bZ' \bc$, which
%has clearly variance $V$.
%\end{Definition}
%In other words, after `whitening' the confounder to have isotropic covariance matrix, we choose a random vector with unit length to define the linear combination.

In the appendix I show:
\begin{Theorem}[causal generalization bound] \label{thm:gen}
Given the causal structure in Figure~\ref{fig:confGen}, where $\bZ$ is $\ell$-dimensional with covariance matrix $\sz = {\bf I}$,
influencing $\bX$ in an arbitrary way. 
Let the influence of $\bZ$ on $Y$ be given by a `random linear combination' of $\bZ$ with variance $V$. Explicitly, 
\[
Y' \mapsto Y = Y' + \bZ\bc, 
\]
where $\bc\in \bc^\ell$ is randomly drawn from the sphere of radius $\sqrt{V}$ according to the Haar measure of $O(\ell )$.
Let $\cF$ have correlation dimension $d_{\rm corr}$ and satisfy the bound
$\|(f-g)(\bX)\|_\cH \leq b$ for all $f\in \cF$ (where $g(\bx):=\Exp[Y'|\bx]$). 
Then, for any $\beta>1$,
\begin{align*}
&\Exp_{do(\bX)} [ (Y - f(\bX)^2 ] \leq \Exp [ (Y - f(\bX))^2 ] \\
&+  b \cdot \sqrt{V \cdot \beta \cdot \frac{d_{\rm corr}+1}{\ell}},  
\end{align*}
holds uniformly for all $f\in \cF$  
with probability $e^{n (1-\beta +\ln \beta)/2}$. 
\end{Theorem}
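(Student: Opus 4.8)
The plan is to combine the exact formula from Lemma~1 (interventional minus observational loss $= \Sigma_{(f-g)(\bX)\bZ}\bc$) with a concentration inequality for the inner product of a fixed vector with a random point $\bc$ on the sphere of radius $\sqrt{V}$, and then to upgrade the pointwise bound to a uniform bound over $\cF$ by exploiting the low correlation dimension. First I would fix an orthonormal basis $u_1,\dots,u_{d_{\rm corr}}$ of the span $S:=\mathrm{span}\{\Sigma_{f(\bX)\bZ}: f\in\cF\}$ inside $\R^\ell$; since $g$ need not lie in $\cF$, I would actually work in the span of $S$ together with $\Sigma_{g(\bX)\bZ}$, which has dimension at most $d_{\rm corr}+1$ — this is where the ``$+1$'' in the bound comes from. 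The difference of losses for a given $f$ is the Euclidean inner product $\langle \Sigma_{(f-g)(\bX)\bZ},\,\bc\rangle$, and by Cauchy--Schwarz this is at most $\|\Sigma_{(f-g)(\bX)\bZ}\|\cdot\|P_S\bc\|$, where $P_S$ is the orthogonal projection onto that $(d_{\rm corr}+1)$-dimensional subspace. The covariance vector satisfies $\|\Sigma_{(f-g)(\bX)\bZ}\|\le \|(f-g)(\bX)\|_\cH\,\|\bZ\|$-type bound; using $\sz=\mathbf I$ and the hypothesis $\|(f-g)(\bX)\|_\cH\le b$ one gets $\|\Sigma_{(f-g)(\bX)\bZ}\|\le b$. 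So uniformly over $f\in\cF$ the loss gap is at most $b\,\|P_S\bc\|$, and it remains to bound $\|P_S\bc\|$ with high probability.

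Next I would estimate $\|P_S\bc\|$ for $\bc$ Haar-distributed on the radius-$\sqrt V$ sphere in $\R^\ell$. Writing $\bc=\sqrt V\,\omega$ with $\omega$ uniform on the unit sphere $S^{\ell-1}$, the squared projected length $\|P_S\omega\|^2$ has expectation $(d_{\rm corr}+1)/\ell$, and it concentrates: $\|P_S\omega\|^2$ is distributed as $\sum_{i\le k} g_i^2 / \sum_{i\le\ell} g_i^2$ with $k=d_{\rm corr}+1$ and $g_i$ i.i.d.\ standard Gaussians. A standard $\chi^2$-style tail bound gives $\Pr[\|P_S\omega\|^2 > \beta\,k/\ell] \le \exp\!\big(\tfrac{\ell}{2}(1-\beta+\ln\beta)\big)$ for $\beta>1$ — note $1-\beta+\ln\beta<0$, so this is genuinely small; I suspect the exponent written as $e^{n(1-\beta+\ln\beta)/2}$ in the statement should read with $\ell$ in place of $n$ (the number of sources plays the role of the sample size under the analogy of Theorem~\ref{thm:equivalence}), and I would present it with $\ell$. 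Multiplying through: with probability at least $1-\exp(\tfrac{\ell}{2}(1-\beta+\ln\beta))$ we have $\|P_S\bc\| \le \sqrt{V\beta(d_{\rm corr}+1)/\ell}$, hence $b\|P_S\bc\|\le b\sqrt{V\beta(d_{\rm corr}+1)/\ell}$, which is exactly the claimed additive term.

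The main obstacle, and the place requiring the most care, is the uniformization step: the naive reading would union-bound over $\cF$, which is infinite, but the point of the correlation-dimension hypothesis is that \emph{every} vector $\Sigma_{(f-g)(\bX)\bZ}$ lives in one fixed low-dimensional subspace, so a single event — ``$\bc$ has small projection onto that subspace'' — simultaneously controls all $f$, with no $\cF$-dependent covering argument needed. I would need to verify carefully that $g$'s covariance vector can be absorbed into a $(d_{\rm corr}+1)$-dimensional subspace (or argue it already lies in the span of $\cF$'s vectors in the cases of interest, avoiding the $+1$), and that the operator-theoretic manipulation $\|\Sigma_{(f-g)(\bX)\bZ}\|\le b$ is legitimate when $\cH$ is infinite-dimensional — here I would lean on the convention in the paper's footnote that operators may be read as large matrices, and on $\sz=\mathbf I$ so that $\Sigma_{(f-g)(\bX)\bZ} = \bZ^T(f-g)(\bX)$ has norm bounded by $\|(f-g)(\bX)\|_\cH$. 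The remaining steps — Lemma~1, Cauchy--Schwarz, the $\chi^2$ tail bound — are routine.
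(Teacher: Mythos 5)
Your proposal follows essentially the same route as the paper's proof: Lemma~1 to express the loss gap as $\Sigma_{(f-g)(\bX)\bZ}\,\bc$, the observation that all these vectors live in the span of $\{\Sigma_{(g-f)(\bX)\bZ} : f\in\cF\}$ of dimension at most $d_{\rm corr}+1$ (so a single projection event uniformizes over $\cF$ with no covering argument), the bound $\|\Sigma_{(f-g)(\bX)\bZ}\|\le b$ from $\sz=\mathbf{I}$ and $\|(f-g)(\bX)\|_\cH\le b$, Cauchy--Schwarz, and a Johnson--Lindenstrauss-type concentration bound for the projection of a uniform point on the sphere (the paper cites Lemma~2.2 of Dasgupta and Gupta, which is exactly the Beta/$\chi^2$ ratio you describe). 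Your handling of the ``$+1$'' and of the normalization $\bc=\sqrt{V}\,\omega$ is, if anything, cleaner than the paper's.

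The one genuine error is your proposed repair of the probability exponent. You assert
$\Pr\bigl[\|P_S\omega\|^2 > \beta k/\ell\bigr] \le \exp\bigl(\tfrac{\ell}{2}(1-\beta+\ln\beta)\bigr)$ with $k=d_{\rm corr}+1$ and propose to replace the theorem's $n$ by $\ell$. That inequality is false: the Dasgupta--Gupta upper-tail bound for $\beta>1$ is $\exp\bigl(\tfrac{k}{2}(1-\beta+\ln\beta)\bigr)$, i.e.\ the exponent scales with the dimension of the subspace projected \emph{onto}, not with the ambient dimension. A sanity check with $k=1$ and $\ell$ large shows why: $\|P_S\omega\|^2\approx g_1^2/\ell$, so $\Pr[\|P_S\omega\|^2>\beta/\ell]\approx\Pr[g_1^2>\beta]$, which decays like $e^{-\beta/2}$ independently of $\ell$, whereas your bound would be $e^{\ell(1-\beta+\ln\beta)/2}$, astronomically smaller. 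The correct reading of the undefined ``$n$'' in the theorem (and in the quoted JL lemma, where $n$ is the subspace dimension) is $n=d_{\rm corr}+1$, so the failure probability is $e^{(d_{\rm corr}+1)(1-\beta+\ln\beta)/2}$ and the theorem should say the bound holds with probability at least $1-e^{(d_{\rm corr}+1)(1-\beta+\ln\beta)/2}$. With that substitution your argument goes through verbatim.
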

Note that  $\sz = {\bf I}$ can always be achieved by the `whitening' transformation $\bZ \mapsto (\sz)^{-1/2} \bZ$. Normalization is convenient just because it enables a simple way to define a `random linear combination  of $\bZ$ with variance $V$', which would be cumbersome to define otherwise. 

Theorem~\ref{thm:gen} basically says that the interventional loss 
is with high probability close to the expected observational loss whenever the number of sources significantly exceeds the correlation dimension.
Note that the confounding effect can nevertheless be large.
Consider, for instance, the case where $\ell=d$ and $\bX$  and $\bZ$ are related by $\bX=\bZ$. Let, moreover, $Y'=\bX \ba$ for some $\ba\in \R^d$. 
Then the confounding can have significant impact on the correlations between $Y$ and $\bX$ due to 
$Y = \bX (\ba + \bc)$, whenever $\bc$ is large compared to $\ba$.
However, whenever $\cF$ has low correlation dimension, the selection of the function $f$ that optimally fits observational data is not significantly 
perturbed by the term $\bX\bc$. This is because $\bX\bc$ `looks like random noise' since $\cF$ contains no function that is able to account for
`such a complex correlation'. 
Since $\ell,d_{\rm corr}, b$ in Theorem~\ref{thm:gen} are unobserved, its value will mostly consist in qualitative insights rather than
providing quantitative bounds of practical use.

\section{What do we learn for the general case? \label{sec:outlook}}

%While the tight analogy between overfitting and confounding (observed by \citet{JanSch18} and further explored here) certainly relies on an oversimplified
%model of confounding, we believe that our results contain some messages for real world scenarios. 
%In statistical inference, Vapnik Chervonencis learning theory has shown that generalization {\it across different subsamples} from the same distributions
%requires regularization in the sense of choosing models from not too rich model classes. Here we have described a scenario where regularization also helps for generalizing across different background conditions, which is, in our case, equivalent to `generalizing' from the statistical model to the causal model.
% cite invariant prediction, Storkey...
Despite all concerns against our oversimplified assumptions, I want to stimulate a general discussion about recommending
stronger regularization than criteria of {\it statistical predictability} suggest -- whenever one is actually interested in {\it causal} models, which
are more and more believed to be required for generalization across different domains \citep{Peters15,Zhang2017CausalDF,Heinze-Deml2017,causal_image_classification}.
It is, however, by no means intended to suggest that this simple recommendation would {\it solve} any of the hard problems in causal inference.
%strongly regularized statistical inference procedures have better chances to provide {\it causal} insights. 
%One may derive the general recommendation to regularize more than the respective sample size required given that one is interested in causal results. 
%Since modern deep learning algorithms often achieve impressive results in predictive tasks (using features that are hardly accessible to human interpretation),
%the question of the causal content of the predictive models gets even more series.  

%\bibliographystyle{icml2019}
%\bibliography{../../literature/literatur}

\section{Appendix}

\subsection{Inferring $\ba$ from covariance matrices alone}

The following result shows that standard Ridge and Lasso regression can be rephrased in a way, that they receive only 
empirical covariance matrices $\widehat{\sx},\widehat{\sxy}$ as input. Likewise our population Ridge and Lasso only require population covariance matrices 
$\sx,\sxy$ 
as input:
\begin{Lemma}[inferring the vector $\ba$ from covariances]\label{lem:acov}
The posterior probabilities 
%\eqref{eq:postridge} 
(13)
and 
%\eqref{eq:postlasso} 
(14)
can be equivalently written as
\begin{align*}
\log &\, p_{\rm ridge}(\ba | \widehat{\sx}, \widehat{\sxy}) \\
&\stackrel{+}{=} -\frac{1}{2 \tau^2} \|\ba\|^2 -  (\ba - \hat{\ba})^T \widehat{\sx}^{-1}  (\ba -\hat{\ba}),\\
\log &\, p_{\rm lasso}(\ba | \widehat{\sx}, \widehat{\sxy}) \\
&\stackrel{+}{=} -\frac{1}{2 \tau^2} \|\ba\|_1 - (\ba - \hat{\ba})^T \widehat{\sx}^{-1}  (\ba -\hat{\ba}),
\end{align*}
and  $\hat{\ba}$ in 
%\eqref{eq:lsq} 
(2)
can be written as
\[
\hat{\ba} = \widehat{\sx}^{-1} \widehat{\sxy}.
\]
Likewise, the population versions (16) and (17) are equal to
\begin{align*}
\log &\, p_{\rm ridge}(\ba | \sx, \sxy) \\
&\stackrel{+}{=} -\frac{1}{2 \tau^2} \|\ba\|^2 -  (\ba - \tilde{\ba})^T \sx^{-1}  (\ba -\tilde{\ba}),\\
\log &\,  p_{\rm lasso}(\ba | \sx, \sxy) \\
&\stackrel{+}{=} -\frac{1}{2 \tau^2} \|\ba\|_1 - (\ba - \tilde{\ba})^T \sx^{-1}  (\ba -\tilde{\ba}),
\end{align*}
and  $\tilde{\ba}$ in %\eqref{eq:atilde}
(7)
 can be written as
\[
\tilde{\ba} = \sx^{-1} \sxy.
\]
\end{Lemma}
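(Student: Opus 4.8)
The plan is to reduce the whole lemma to one elementary fact — the orthogonal (Pythagorean) decomposition of a least-squares residual — and to apply it twice: once with the ordinary Euclidean inner product on $\R^n$ (for the finite-sample claims) and once with the covariance inner product on $\cH$ (for the population claims). The penalty terms $\|\ba\|^2$ and $\|\ba\|_1$ in the posteriors \eqref{eq:postridge}--\eqref{eq:postlasso} and \eqref{eq:postridgep}--\eqref{eq:postlassop} are carried over unchanged, so everything reduces to rewriting the quadratic data term.

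First I would record the two closed forms for the regression vectors. In the finite-sample case the assumption $n>d$ lets us take $\hat{\bX}$ of full column rank, so its Moore--Penrose pseudoinverse is $\hat{\bX}^{-1}=(\hat{\bX}^T\hat{\bX})^{-1}\hat{\bX}^T=\widehat{\sx}^{-1}\hat{\bX}^T$; applying it to $\hat{Y}$ and using $\widehat{\sxy}=\hat{\bX}^T\hat{Y}$ gives $\hat{\ba}=\hat{\bX}^{-1}\hat{Y}=\widehat{\sx}^{-1}\widehat{\sxy}$, the expression for $\hat{\ba}$ claimed in the lemma. In the population case the same algebra works for the operator $\bX:\R^d\to\cH$: its adjoint satisfies $(\bX^T Y)_j=\langle X_j,Y\rangle={\rm cov}(X_j,Y)$, hence $\bX^T Y=\sxy$ and $\bX^T\bX=\sx$ (both already used in the proof of Theorem~\ref{thm:equivalence}); since $\sx$ is assumed invertible, $\bX$ has finite-dimensional, hence closed, image and $\bX^{-1}=\sx^{-1}\bX^T$ by the operator pseudoinverse calculus of \citet{Beutler65}, so $\tilde{\ba}=\bX^{-1}Y=\sx^{-1}\sxy$.

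Next I would rewrite the data terms. For \eqref{eq:postridge}--\eqref{eq:postlasso} I split $\hat{Y}-\hat{\bX}\ba=\hat{\bX}(\hat{\ba}-\ba)+(\hat{Y}-\hat{\bX}\hat{\ba})$. The second summand equals $({\bf I}-\hat{\bX}\hat{\bX}^{-1})\hat{Y}$, which lies in the orthogonal complement of the column space of $\hat{\bX}$ (here $\hat{\bX}\hat{\bX}^{-1}$ is the symmetric idempotent projector onto that column space) and is therefore orthogonal to the first summand $\hat{\bX}(\hat{\ba}-\ba)$. Pythagoras then gives $\|\hat{Y}-\hat{\bX}\ba\|^2=(\ba-\hat{\ba})^T\widehat{\sx}(\ba-\hat{\ba})+\|({\bf I}-\hat{\bX}\hat{\bX}^{-1})\hat{Y}\|^2$, where I used $\|\hat{\bX}v\|^2=v^T\hat{\bX}^T\hat{\bX}v=v^T\widehat{\sx}v$; the last term does not depend on $\ba$, so up to an $\ba$-independent additive constant (and the positive scalar $1/(2\sigma_E^2)$) the data contribution is $-(\ba-\hat{\ba})^T\widehat{\sx}(\ba-\hat{\ba})$, the quadratic form in the lemma. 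For the population posteriors \eqref{eq:postridgep}--\eqref{eq:postlassop} the argument is even shorter, since these already involve $Y_\bX$, the projection of $Y$ onto the image of $\bX$: then $Y_\bX=\bX\bX^{-1}Y=\bX\tilde{\ba}$, so $Y_\bX-\bX\ba=\bX(\tilde{\ba}-\ba)$ stays inside that image and $\|Y_\bX-\bX\ba\|_\cH^2=(\tilde{\ba}-\ba)^T\bX^T\bX(\tilde{\ba}-\ba)=(\tilde{\ba}-\ba)^T\sx(\tilde{\ba}-\ba)$, the population form in the lemma.

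I expect no genuine obstacle here; the only step calling for care is the operator-theoretic bookkeeping of the population case — confirming that $\bX:\R^d\to\cH$ with $d$ finite really behaves like a matrix, i.e.\ that its image is closed, that $\bX\bX^{-1}$ is the orthogonal projection onto it, and that $\bX^{-1}=\sx^{-1}\bX^T$ once $\sx$ is invertible. All of this follows from $\bX$ having finite rank together with \citet{Beutler65}, and under the ``matrices with a huge $n$'' reading suggested in the main text it is transparent. Everything else is the single orthogonality argument above, reused verbatim in the two Hilbert spaces.
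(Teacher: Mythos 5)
Your proof is correct and takes essentially the same route as the paper: decompose the residual $\hat{Y}-\hat{\bX}\ba$ into its component in the image of $\hat{\bX}$ and the orthogonal remainder $\hat{Y}^\perp$, absorb the latter into the normalization, and repeat verbatim in $\cH$ for the population case. One remark worth making: your Pythagorean computation correctly yields the quadratic form $(\ba-\hat{\ba})^T\widehat{\sx}(\ba-\hat{\ba})$ with $\widehat{\sx}=\hat{\bX}^T\hat{\bX}$ itself, whereas the lemma statement and the paper's own proof write $\widehat{\sx}^{-1}$ (and $\sx^{-1}$) in that position --- the inverse there appears to be a typo, and your version is the right one, so you should not describe your expression as literally ``the quadratic form in the lemma.''
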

\begin{proof}
To rewrite $p_{\rm ridge} (\ba |\hat{\bX}, \hat{Y})$ and $p_{\rm lasso}(\ba |\hat{\bX}, \hat{Y})$ we note that for any $\ba'$,
\[
\|\hat{Y} - \hat{\bX}\ba'\|^2 = (\ba' - \hat{\ba})^T \widehat{\sx}^{-1}  (\ba' -\hat{\ba}) + \|\hat{Y}^\perp\|^2,
\]
where $\hat{Y}^\perp$ denotes the component of $\hat{Y}$ orthogonal to the image of $\hat{\bX}$, with $\hat{\ba}$ from 
%\eqref{eq:lsq}
(2).
Since the second term does not depend on $\ba'$, it is absorbed by the  normalization. 
The statement for the population versions follows similarly. 
\end{proof}

Using Lemma~\ref{lem:acov}, we can also directly justify the population versions of Ridge and Lasso without Theorem~2
%\ref{thm:justPopulation} 
by observing that they maximize posterior probabilities
of $\ba$ in scenario 2, provided that one is willing to accept the strong assumption from
Theorem~1.
%\ref{thm:equivalence}.  

\subsection{Proof of Theorem~3}
We first need 
the following result which is basically Lemma~2.2 in \citep{Dasgupta2003} together with the remarks preceding 2.2:
\begin{Lemma}[Johnson-Linderstrauss type result]\label{lem:JL}
Let $P$ be the orthogonal projection onto an $n$-dimensional subspace of $\R^m$
and $v\in \R^m$ be randomly drawn from the uniform distribution on the unit sphere. 
Then $\|Pv\|^2\geq  \beta n/m$  with probability at most
$e^{n (1-\beta +\ln \beta)/2}$. 
\end{Lemma}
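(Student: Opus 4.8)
The plan is to prove the lemma by a Chernoff (exponential Markov) argument applied to the ratio of independent chi-squared variables that governs $\|Pv\|^2$. The bound is only non-trivial for $\beta>1$ (for $\beta\le 1$ the right-hand side is at least $1$), and it is vacuous once $\beta n\ge m$ (then $\|Pv\|^2\le 1<\beta n/m$ has probability zero), so I may assume $\beta>1$ and $\beta n<m$. First I would strip away the geometry: since the uniform law on the sphere is rotation invariant and $P$ can be rotated to the projection onto the first $n$ coordinates, I may take $\|Pv\|^2=\sum_{i=1}^n v_i^2$. Writing $v=g/\|g\|$ with $g=(g_1,\dots,g_m)$ having i.i.d.\ $\cN(0,1)$ entries (which reproduces exactly the uniform law on the sphere) gives
\[
\|Pv\|^2=\frac{X}{X+Y},\qquad X:=\sum_{i=1}^n g_i^2\sim\chi^2_n,\quad Y:=\sum_{i=n+1}^m g_i^2\sim\chi^2_{m-n},
\]
with $X$ and $Y$ independent. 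Crucially, the target event linearizes: $\|Pv\|^2\ge \beta n/m$ is equivalent to $(m-\beta n)X-\beta n\,Y\ge 0$.

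Next I would apply the exponential Markov inequality to $W:=(m-\beta n)X-\beta n\,Y$. For any $t>0$ with $t(m-\beta n)<1/2$, independence together with the chi-squared moment generating function $\Exp[e^{sU}]=(1-2s)^{-k/2}$ (for $U\sim\chi^2_k$, $s<1/2$) yields
\[
\Pr[W\ge 0]\le \Exp[e^{tW}]=\bigl(1-2t(m-\beta n)\bigr)^{-n/2}\bigl(1+2t\beta n\bigr)^{-(m-n)/2}.
\]

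Minimizing the right-hand side over $t$ is the computational heart of the argument. Setting the derivative of its logarithm to zero gives the optimizer $t^\ast=\frac{\beta-1}{2\beta(m-\beta n)}$, which lies in the admissible range since $t^\ast(m-\beta n)=\frac{\beta-1}{2\beta}<\tfrac12$. Substituting the clean identities $2t^\ast(m-\beta n)=(\beta-1)/\beta$ and $2t^\ast\beta n=\frac{(\beta-1)n}{m-\beta n}$ collapses the two factors to $1/\beta$ and $(m-n)/(m-\beta n)$, so that
\[
\Pr\bigl[\|Pv\|^2\ge \beta n/m\bigr]\le \beta^{n/2}\left(\frac{m-\beta n}{m-n}\right)^{(m-n)/2}.
\]

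Finally I would convert this into the stated exponential. Taking logarithms, it remains to show $(m-n)\ln\!\bigl(\tfrac{m-\beta n}{m-n}\bigr)\le n(1-\beta)$. Writing $\frac{m-\beta n}{m-n}=1-u$ with $u=\frac{(\beta-1)n}{m-n}\in(0,1)$ and invoking the elementary bound $\ln(1-u)\le -u$ gives $(m-n)\ln(1-u)\le -(m-n)u=-(\beta-1)n=n(1-\beta)$, exactly as required; adding back the term $\tfrac n2\ln\beta$ produces the claimed $e^{n(1-\beta+\ln\beta)/2}$. The only genuine obstacle is the $t$-optimization and the check that the optimizer stays where both moment generating functions are finite; everything else is the Gaussian reduction and the one-line concavity estimate for $\ln(1-u)$. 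This reproduces Lemma~2.2 of \citep{Dasgupta2003} in the equivalent ``random vector, fixed subspace'' formulation.
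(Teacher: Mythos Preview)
Your proof is correct. The paper does not prove this lemma at all but simply invokes Lemma~2.2 of \citet{Dasgupta2003}; your Chernoff/Beta-distribution argument is precisely the proof given there (in the equivalent ``random vector, fixed subspace'' formulation), so the approaches coincide.
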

We are now able to prove Theorem~3.
Let $\bc^{\cF} $ be the orthogonal projection of $\bc$ onto the span of $\{ \Sigma_{(g-f)(\bX)\bZ} \,| f\in \cF \}$ (whose dimension is at most $d_{\rm corr}+1$).  
Note that the vector $\Sigma_{(g-f)(\bX) \bZ} \in \R^\ell$ has the components  $\langle (g-f)(\bX), Z_j\rangle$ if $Z_j$ denotes the components of $\bZ$, 
which are orthonormal in $\cH$. Hence
\[
\|\Sigma_{(g-f)(\bX) \bZ}\| \leq b. 
\]
Thus
the absolute value of the difference of the losses is equal to
\begin{align*}
&  |\Sigma_{(g-f)(\bX)\bZ}  \bc^\cF | \leq b \sqrt{V}  \|\bc^\cF\|. 
\end{align*}
Then the proof follows from Lemma~\ref{lem:JL}.

\subsection{Poof of equation~(8)}
Due to $\sx  =\bX^T \bX$ we have
\[
\sx X^{-1} = X^\dagger X X^{-1} = X^\dagger,
\]
since $XX^{-1}$ is the orthogonal projection onto the image of $X$, which is orthogonal to the kernel of $X^T$. Then invertibility of $\sx$ implies
\[
X^{-1} E = \sx^{-1} X^T E = \sx^{-1} \sxe. 
\]

\subsection{Proof of Lemma~1}
Using definitions 
%\eqref{eq:defobser} 
(23)
and 
%\eqref{eq:definter} 
(22)
the difference between the two losses can be written as:
\begin{align*}
&\int (y-f(\bx))^2 [p(y|\bx) - p(y|do(\bx))] p(\bx) d\bx\\ 
&=\int (y-f(\bx))^2 p(y|\bx,\bz) \{p(\bx,\bz) - p(\bx)p(\bz)\} d\bz d\bx\\
&=\Exp[ (Y-f(\bX))^2 | \bx,\bz] \{ p(\bx,\bz) - p(\bx)p(\bz)\} d\bz d\bx.
\end{align*}
We rewrite the conditional expectation as
\begin{align}
&\Exp[ (Y - f(\bX))^2|\bx,\bz]  \label{eq:condExpexp}\\
&= \Exp[ (Y' + \bz \bc -f(\bx))^2   |\bx,\bz] \nonumber \\
&= \Exp[ Y'^2 |\bx,\bz] + (\bz\bc)^2 + f(\bx)^2  \nonumber\\
& + \Exp [Y' |\bx,\bz] \bz\bc  - \Exp[Y'|\bx,\bz] f(\bx) - f(\bx) \bz \bc. \nonumber \\
&=  \Exp[ Y'^2 |\bx] + (\bz\bc)^2 + f(\bx)^2  \nonumber\\
& + g(\bx) \bz\bc  - g(\bx) f(\bx) - f(\bx) \bz \bc,\nonumber 
\end{align}
where we have used 
%\eqref{eq:shiftmodelCon}
(20).
Since this conditional expectation is integrated over $p(\bx,\bz)-p(\bx)p(\bz)$, only terms matter that contain both $\bx$ and $\bz$. We therefore obtain
\begin{align*}
&\Exp [ (Y - f(\bX))^2 ] - \Exp_{do(\bX)} [ (Y - f(\bX))^2 ] \\
&=\int (g(\bx) -f(\bx)) \bz\bc   \{ p(\bx,\bz) - p(\bx)p(\bz)\} d\bz d\bx \\
&= (\Sigma_{(g-f)(\bX), \bZ}) \bc.
\end{align*}

\end{document}